\def\eqref#1{equation~\ref{#1}}
\def\1{\bm{1}}
\DeclareMathAlphabet{\mathsfit}{\encodingdefault}{\sfdefault}{m}{sl}
\SetMathAlphabet{\mathsfit}{bold}{\encodingdefault}{\sfdefault}{bx}{n}
\def\gA{{\mathcal{A}}}
\def\gD{{\mathcal{D}}}
\def\gF{{\mathcal{F}}}
\def\gL{{\mathcal{L}}}
\def\gN{{\mathcal{N}}}
\def\gP{{\mathcal{P}}}
\def\gR{{\mathcal{R}}}
\def\gX{{\mathcal{X}}}
\def\sP{{\mathbb{P}}}
\def\sR{{\mathbb{R}}}
\DeclareMathOperator*{\argmin}{arg\,min}
\DeclareMathOperator*{\sE}{\mathbb{E}}
\DeclareMathOperator*{\E}{\mathbb{E}}
\newtheorem{definition}{Definition}[section]
\newtheorem{example}{Example}[section]
\newtheorem{theorem}{Theorem}[section]
\newtheorem{lemma}[theorem]{Lemma}
\newtheorem{proposition}[theorem]{Proposition}
\newtheorem{remark}{Remark}
\definecolor{citecolor}{HTML}{0071BC}
\definecolor{linkcolor}{HTML}{ED1C24}
\definecolor{green}{rgb}{0.13, 0.55, 0.13}
\newcommand*\samethanks[1][\value{footnote}]{\footnotemark[#1]}
\newcommand{\revise}[1]{{#1}}
\title{ArCL: Enhancing Contrastive Learning with Augmentation-Robust Representations}
\author{\large Xuyang Zhao$^{1,2}$\thanks{Equal Contribution. This work was partially done when Xuyang was visiting Qing Yuan Research Institute.}
\hspace{0.06in} Tianqi Du$^{1,3}$\samethanks[1]
\hspace{0.06in} Yisen Wang$^{3,4}$
\hspace{0.01in} Jun Yao$^5$
\hspace{0.01in} Weiran Huang$^2$\thanks{Correspondence to Weiran Huang (weiran.huang@outlook.com).}\\[0.1in]
$^1$ School of Mathematical Sciences, Peking University\\[0.025in]
$^2$ Qing Yuan Research Institute, SEIEE, Shanghai Jiao Tong University\\[0.025in]
$^3$ National Key Lab of General Artificial
Intelligence, School of Intelligence Science\\ \ \ \ and Technology, Peking University\\[0.025in]
$^4$ Institute for Artificial Intelligence, Peking University\quad%
$^5$ Huawei Noah's Ark Lab
}
\begin{document}

\maketitle

\begin{abstract}

Self-Supervised Learning (SSL) is a paradigm that leverages unlabeled data for model training.
Empirical studies show that SSL can achieve promising performance in distribution shift scenarios, where the downstream and training distributions differ.
However, the theoretical understanding of its transferability remains limited.
In this paper, we develop a theoretical framework to analyze the transferability of self-supervised contrastive learning, by investigating the impact of data augmentation on it.
Our results reveal that the downstream performance of contrastive learning depends largely on the choice of data augmentation. 
Moreover, we show that contrastive learning fails to learn domain-invariant features, which limits its transferability.
Based on these theoretical insights, we propose a novel method called Augmentation-robust Contrastive Learning (ArCL), which guarantees to learn domain-invariant features and can be easily integrated with existing contrastive learning algorithms. We conduct experiments on several datasets and show that ArCL significantly improves the transferability of contrastive learning.
\end{abstract}
\section{Introduction}

A common assumption in designing machine learning algorithms is that training and test samples are drawn from the same distribution. 
However, this assumption may not hold in real-world applications, and algorithms may suffer from distribution shifts, where the training and test distributions differ. 
This issue has motivated a plethora of research in various settings, such as transfer learning, domain adaptation and domain generalization \citep{blanchard2011generalizing, muandet2013domain, wang2021generalizing, shen2021towards}. 
Different ways of characterizing the relationship between test and training distributions lead to different algorithms. 
Most literature studies this in the supervised learning scenario. It aims to find features that capture some invariance across different distributions, and assume that such invariance also applies to test distributions \citep{peters2016causal, rojas2018invariant, arjovsky2019invariant,mahajan2021domain, jin2020domain, ye2021towards}.

Self-Supervised Learning (SSL) has attracted great attention in many fields \citep{he2020momentum,chen2020simple, grill2020bootstrap, chen2021exploring, zbontar2021barlow}.
It first learns a representation from a large amount of unlabeled training data, and then fine-tunes the learned encoder to obtain a final model on the downstream task.
Due to its two-step nature, SSL is more likely to encounter the distribution shift issue.
Exploring its transferability under distribution shifts has become an important topic. 
Some recent works study this issue empirically \citep{liu2021self,goyal2021self,von2021self,wang2021self,shi2022robust}. 
However, the theoretical understanding is still limited, which also hinders the development of algorithms.

In this paper, we study the transferability of self-supervised contrastive learning in distribution shift scenarios from a theoretical perspective. 
In particular, we investigate which downstream distributions will result in good performance for the representation obtained by contrastive learning. 
We study this problem by deriving a connection between the contrastive loss and the downstream risk. 
Our main finding is that data augmentation is essential:  
contrastive learning \emph{provably} performs well on downstream tasks whose distributions are close to the \emph{augmented} training distribution.

Moreover, the idea behind contrastive learning is to find representations that are invariant under data augmentation.
This is similar to the domain-invariance based supervised learning methods, since applying each kind of augmentation to the training data can be viewed as inducing a specific domain.
Unfortunately, from this perspective, we discover that contrastive learning fails to produce a domain-invariant representation, limiting its transferability.
To address this issue, we propose a new method called Augmentation-robust Contrastive Learning (ArCL), which can be integrated with various widely used contrastive learning algorithms, such as SimCLR \citep{chen2020simple} and MoCo \citep{he2020momentum}.
In contrast to the standard contrastive learning, ArCL forces the representation to align the two farthest positive samples, and thus provably learns domain-invariant representations.
We conducted experiments on various downstream tasks to test the transferability of representations learned by ArCL on CIFAR10 and ImageNet. 
Our experiments demonstrate that ArCL significantly improves the standard contrastive learning algorithms.

\subsection*{Related Work}
\paragraph{Distribution shift in supervised learning.}
Distribution shift problem has been studied in many literature \citep{blanchard2011generalizing, muandet2013domain, wang2021generalizing, shen2021towards}.
Most works aim to learn a representation that performs well on different source domains simultaneously \citep{rojas2018invariant, mahajan2021domain, jin2020domain}, following the idea of causal invariance \citep{peters2016causal, arjovsky2019invariant}. 
Structural equation models are often assumed for theoretical analysis \citep{von2021self, liu2020learning, mahajan2021domain}. 
Distributionally robust optimization  optimizes a model’s worst-case performance over some uncertainty set directly \citep{krueger2021out, sagawa2019distributionally,duchi2021learning,duchi2021statistics}.
Stable learning \citep{shen2020stable,kuang2020stable} learns a set of global sample weights that could remove the confounding bias for all the potential treatments from data distribution. Disentangled representation learning \citep{bengio2013representation, trauble2021disentangled, kim2018disentangling} aims to learn representations where distinct and informative factors of variations in data are separated.

\paragraph{Theoretical understanding of contrastive learning.} 

A number of recent works also aim to theoretically explain the success of contrastive learning in IID settings. 
One way to explain it is through the mutual information between positive samples \citep{tian2020contrastive,hjelm2018learning,tschannen2019mutual}. \citet{arora2019theoretical} directly analyze the generalization of InfoNCE loss based on the assumption that positive samples are drawn from the same latent classes.
In the same setting, \cite{bao2022surrogate} establish equivalence between InfoNCE and supervised loss and give sharper upper and lower bounds.
\citet{huang2022generalization} take data augmentation into account and provide generalization bounds based on the nearest neighbor classifier. 

\paragraph{Contrastive learning in distribution shift.} 
\citet{shen2022connect} and \citet{haochen2022beyond} study contrastive learning in unsupervised domain adaptation, where unlabeled target data are obtained.
\cite{shi2022robust} show that SSL is the most robust on distribution shift datasets compared to auto-encoders and supervised learning.
\cite{hu2022your} improve the out-of-distribution performance of SSL from an SNE perspective.
Other robust contrastive learning methods \citep{kim2020adversarial, jiang2020robust} focus on adversarial robustness while this paper focuses on distributional robustness.

\section{Problem Formulation}

Given a set of unlabeled data where each sample $X$ is i.i.d.\ sampled from training data distribution $\gD$ on $\gX\subseteq\sR^{d}$, the goal of Self-Supervised Learning (SSL) is to learn an encoder $f\colon \gX\to\sR^{m}$ %
for different downstream tasks. 
Contrastive learning is a popular approach for SSL, which augments each sample $X$ twice to obtain a positive pair $(X_1,X_2)$, and then learns the encoder $f$ by pulling them close and pushing random samples (also called negative samples) away in the embedding space.
The data augmentation is done by applying a transformation $A$ to the original data, where $A$ is randomly selected from a transformation set $\gA$ according to some distribution $\pi$.
We use $\gD_A$ to denote the distribution when
some random transformation $A$ is applied to the original data distribution $\gD$, and use $\mathcal{D}_{\pi}$ to denote the augmented data distribution  $\gD_\pi = \int_{\gA} \gD_A \,\text{d}\pi(A).$
The loss of contrastive learning typically consists of two terms:
\begin{equation}\label{eq:cl}
\gL_{\mathrm{con}}(f; {\mathcal{D}}, \pi) := \gL_{\mathrm{align}}(f; \mathcal{D}, \pi) + \lambda \gL_{\mathrm{reg}}(f; \mathcal{D}, \pi),
\end{equation}
where $\gL_{\mathrm{align}}(f; \mathcal{D}, \pi) := \mathbb{E}_{X\sim \mathcal{D}}\mathbb{E}_{(A_1, A_2)\sim\pi^2}\|f(A_{1}(X))-f(A_{2}(X))\|^2$ measures the alignment of positive samples\footnote{In this paper, notation $\|\cdot\|$ stands for $L^2$-norm or Frobenius norm for vectors and matrices, respectively.}, and $\gL_{\mathrm{reg}}(f; \mathcal{D}, \pi)$ is the regularization term to avoid feature collapse. 
For example, the regularization term in InfoNCE loss is
\begin{align*}
\gL_{\mathrm{reg}}(f;\gD,\pi):= \sE_{(X,X')\sim\gD^2}\sE_{(A_1,A_2,A')\sim\pi^3}\log\left(e^{f(A_{1}(X))^\top f(A_{2}(X))} + e^{f(A_{1}(X))^\top f(A'(X'))} \right).
\end{align*}

In this paper, we consider  multi-class classification problems $\mathcal{X}\subseteq\mathbb{R}^{d}\to\mathcal{Y}=\{1,\dots,K\}$ as downstream tasks, and focus on the covariate shift setting, i.e., target distribution $\mathcal{D}^{\text{tar}}$ is different from $\mathcal{D}$ but $\mathbb{P}(Y|X)$ is fixed. 
We use the best linear classifier on top of $f$ for downstream classification, and evaluate the performance of $f$ on $\mathcal{D}^{\text{tar}}$ by its risk (also used by \citet{shi2022robust}):
\begin{equation}\label{def:risk}
\gR(f;\mathcal{D}^{\text{tar}}) := \min_{h\in\mathbb{R}^{K\times m}}  \sE_{X\sim \mathcal{D}^{\text{tar}}}\ell(h\circ f(X),Y),
\end{equation}
where $\ell$ is the loss function. 
We focus on the case where $\ell$ is the square loss.
With a slight abuse of notation, for the classifier $h\circ f:\mathbb{R}^{m} \to \mathbb{R}^k$, we also use $\gR(h\circ f;\mathcal{D}^{\text{tar}})$ to denote its risk. 
Our goal is to study the transferability of representation learned by contrastive learning on different downstream distributions.
For simplicity, we assume that $\|f(x)\|=1$ for every $x\in\mathcal{X}$ and $f$ is $L$-Lipschitz continuous throughout this paper.

At the end of this section, we remark the difference between transformations and augmentations. Transformations are deterministic mapping from $\mathcal{X}$ to $\mathcal{X}$, and augmentations are random application of them. 
We give examples to demonstrate that many augmentation methods used in practice indeed satisfy this model.
\begin{example}
    The augmentations used in SimCLR \citep{chen2020simple} are compositions of random transformations, such as RandomCrop, HorizonalFlip and Color distortion. Each $A$ denotes a specific composition here.
    Users will determine probabilities that each transformation will be applied. If used, the parameters of the random transformation such as crop range are also selected randomly. Until the parameters of each used transformations are set, the composited transformation is deterministic. Therefore, we can view these augmentation processes as randomly selecting some deterministic transformations, following the distribution $\pi$ determined by users.
\end{example}
\begin{example}
    Some recent works \citep{jahanian2021generative} propose to augment data by transformations in the latent space. Suppose we have a pre-trained generative model with an encoder $T:\mathcal{X}\to\mathcal{Z}$ and a generator $G:\mathcal{Z}\to\mathcal{X}$, where $\mathcal{Z}$ is a latent space. Since $\mathcal{Z}$ is usually simple, one can augment data in $\mathcal{X}$ by randomly shifting $T(X)$. In this setting, the augmentation can be parameterized by $A_\theta( X) = G(T(X)+\theta)$ for $\theta\in\mathcal{Z}.$ The distribution $\pi$ of $\theta$ is usually chosen to be a normal distribution.
\end{example}

\section{Transferability Analysis: the Crucial Role of Data Augmentation}\label{sec:4}

In this section, we theoretically demonstrate that data augmentation plays a crucial role in the transferability of contrastive learning.

The first step is to establish the connection between contrastive loss $\gL_{\mathrm{con}}(f; \gD,\pi)$ and downstream risk $\gR(f;\gD^{\mathrm{tar}})$. 
For this purpose, we adopt the $(\sigma, \delta)$-augmentation notion proposed by \citet{huang2022generalization}, which was used there to study the KNN error of contrastive learning.

\begin{definition}[$(\sigma,\delta)$-transformation] Let $C_k\subseteq\mathcal{X}$ be the set of all points in class $k$. A data transformation set $\gA$ is called a $(\sigma,\delta)$-transformation on $\mathcal{D}$ for some $\sigma\in(0,1]$ and $\delta>0$, if for every $k\in[K]$, there exists $C_k^0\subseteq C_k$ such that
$$\mathbb{P}_{X\sim\mathcal{D}}(X\in C_{k}^0)\ge \sigma\,\mathbb{P}_{X\sim\mathcal{D}} (X\in C_{k}),\text{ and }
{\sup}_{x_1,x_2 \in C_{k}^0} d_A(x_1,x_2)\le \delta,$$
where $d_{\mathcal{A}}(x_1,x_2):=\inf_{A_1,A_2\in \mathcal{A}}d(A_1(x_1), A_2(x_2))$ for some distance $d(\cdot,\cdot)$.
\end{definition}

This definition measures the concentration of augmented data. 
A transformation set with smaller $\delta$ and larger $\sigma$ clusters original data more, i.e., samples from the same class are closer after augmentation. 
Therefore, one can expect the learned representation $f$ to have better cluster performance, which was measured by KNN error in \citep{huang2022generalization}. 
We remark that most of contrastive learning theories \citep{arora2019theoretical,ash2021investigating,bao2022surrogate} assume that given class $k$, positive pairs are obtained by i.i.d. sampling from $\sP_{X\sim\gD}(\,\cdot\,|\,X\in C_k)$. 
Compared with this data generation model, the $(\sigma,\delta)$ notion is more practical.

By taking into account the linear classifier on top of $f$, we extend the KNN result in \citep{huang2022generalization} to the downstream risk, and obtain the following lemma.

\begin{lemma}\label{lem:iid}
    For any distribution $\gD$ and encoder $f$, define $\mu_k(f;\mathcal{D}):=\mathbb{E}_{X\sim \mathcal{D}}f(X)$ for $k\in[K]$.
    Suppose that $\gA$ is a
    $(\sigma,\delta)$-transformation.
    For any representation $f:\mathcal{X}\to\mathbb{R}^{d_1}$ and linear layer $h\in R^{K\times d_1}$, we have
\begin{equation}\label{eq:iid}
    \gR(h\circ f; \mathcal{D}_{\pi})\leq c\|h\| \sqrt{{K\sigma}}(\gL_{\mathrm{align}}(f; \mathcal{D}, \pi))^{\frac{1}{4}} + \|h\| \tau(\sigma,\delta) + \sum_{k=1}^K {\mathcal{D}}_{\pi}(C_k)\|e_k -h\circ\mu_k(f; \mathcal{D}_{\pi})\|,
\end{equation}
where $c$ is an absolute constant and $\tau(\sigma,\delta)$ is a constant which only depends on $(\sigma,\delta)$. The detail form of $\tau$ is shown in the appendix. 
\end{lemma}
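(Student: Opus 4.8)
The plan is to split the downstream risk into a term governed purely by how tightly $f$ concentrates each class after augmentation, plus the ``linear head fits the class means'' term that already appears on the right-hand side of~(\ref{eq:iid}); the concentration term is then handled by adapting the clustering analysis of \citet{huang2022generalization}. First, since $\gR(h\circ f;\gD_\pi)$ is the expected loss of the \emph{fixed} predictor $h\circ f$ under $\gD_\pi$, I would use the triangle inequality and $\|h v\|\le\|h\|\,\|v\|$ to write, for $X\sim\gD_\pi$ in class $k$,
\[ \|h\circ f(X) - e_k\| \;\le\; \|h\|\,\big\|f(X) - \mu_k(f;\gD_\pi)\big\| \;+\; \big\|e_k - h\circ\mu_k(f;\gD_\pi)\big\|. \]
Taking expectation over $X\sim\gD_\pi$ and splitting according to the class of $X$ immediately produces the last term of~(\ref{eq:iid}), so it remains to bound $\E_{X\sim\gD_\pi}\|f(X)-\mu_Y(f;\gD_\pi)\|$ by $c\sqrt{K\sigma}\,\gL_{\mathrm{align}}(f;\gD,\pi)^{1/4}+\tau(\sigma,\delta)$.

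\textbf{Feature concentration.} To control $\E_{X\sim\gD_\pi}\|f(X)-\mu_Y(f;\gD_\pi)\|$ I would bring in the $(\sigma,\delta)$ structure as follows. By Markov's inequality applied to $\gL_{\mathrm{align}}(f;\gD,\pi)=\E_{X\sim\gD}\E_{(A_1,A_2)\sim\pi^2}\|f(A_1 X)-f(A_2 X)\|^2$, the set $U$ of points $x$ for which $\E_{(A_1,A_2)\sim\pi^2}\|f(A_1 x)-f(A_2 x)\|^2$ exceeds $\sqrt{\gL_{\mathrm{align}}(f;\gD,\pi)}$ has $\gD(U)\le\sqrt{\gL_{\mathrm{align}}(f;\gD,\pi)}$, and for $x\notin U$ Jensen's inequality gives $\E_{(A_1,A_2)\sim\pi^2}\|f(A_1 x)-f(A_2 x)\|\le\gL_{\mathrm{align}}(f;\gD,\pi)^{1/4}$. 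On the other hand, for $x_1,x_2$ in the class-$k$ core $C_k^0$ the $(\sigma,\delta)$ assumption provides transformations with $d(A_1 x_1,A_2 x_2)\le\delta$, whence $\|f(A_1 x_1)-f(A_2 x_2)\|\le L\delta$ by $L$-Lipschitzness. Combining the per-point alignment with the across-point proximity of the augmented cores --- the step that reproduces the clustering guarantee of \citet{huang2022generalization} --- shows that, up to the exceptional set $U$ and the non-core mass, the features $f(X)$ of augmented class-$k$ samples all lie within $O(\gL_{\mathrm{align}}(f;\gD,\pi)^{1/4}+L\delta)$ of $\mu_k(f;\gD_\pi)$. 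On the complement --- the non-core mass, of total probability at most $1-\sigma$, together with $U$, of probability at most $\sqrt{\gL_{\mathrm{align}}(f;\gD,\pi)}$ --- I would use the trivial bound $\|f(X)-\mu_k(f;\gD_\pi)\|\le 2$ afforded by $\|f(\cdot)\|=1$. Summing the per-class contributions and applying Cauchy--Schwarz over the $K$ classes yields the factor $\sqrt{K}$ and, through the core probabilities $\gD(C_k^0)\ge\sigma\gD(C_k)$, the dependence on $\sigma$; the $\delta$-dependent feature spread together with the $1-\sigma$ non-core term is collected into $\tau(\sigma,\delta)$ (whose explicit form is deferred to the appendix).

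\textbf{Main obstacle.} The hard part is the second step, and specifically converting the \emph{existential} $(\sigma,\delta)$ property --- ``\emph{some} pair of transformations makes two core points $\delta$-close'' --- into a quantitative bound on the deviation of $f$ at a $\pi$-randomly augmented sample from the cluster centre, while simultaneously keeping the exceptional-set probability and the exponents (the $1/4$ power of $\gL_{\mathrm{align}}$ and the $\sqrt{K\sigma}$ prefactor) under control. This is precisely where one must lean on, and re-derive carefully, the generalization analysis of \citet{huang2022generalization}; by comparison, the first step --- passing from their nearest-neighbour/clustering statement to the least-squares downstream risk via the triangle-inequality decomposition --- is elementary.
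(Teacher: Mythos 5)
Your proposal follows essentially the same route as the paper: a triangle-inequality decomposition of the downstream risk into the head-fitting term $\sum_k \gD_\pi(C_k)\|e_k-h\circ\mu_k\|$ plus $\|h\|\cdot\E\|f(X)-\mu_Y\|$, with the latter controlled by the $(\sigma,\delta)$-concentration bound $\E_{x\in C_k}\|f(x_1)-\mu_k\|\le c\,p_k^{-1/2}\gL_{\mathrm{align}}^{1/4}+\tau(\sigma,\delta)$ and a Cauchy--Schwarz step $\sum_k\sqrt{p_k}\le\sqrt{K}$. The only difference is that the paper imports that concentration inequality wholesale from Theorem 2 and Lemma B.1 of \citet{huang2022generalization} rather than re-deriving it as you sketch, so the step you flag as the main obstacle is exactly the part the paper treats as a black box.
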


The first term in \eqref{eq:iid} is the alignment of $f$, which is optimized during the contrastive learning pre-training on $\gD$. 
The second term is determined solely by the $(\sigma,\delta)$ quantity of the data augmentation. 
Larger $\sigma$ and smaller $\delta$ induce smaller $\tau(\sigma,\delta)$.
These two terms together quantify the cluster level of the representation $f$ learned by contrastive learning. 
The third term is related to the linear layer $h$ on top of $f$, and will be minimized in downstream training, where each class center is converted to the corresponding ground-truth label.
If the regularization term $\gL_{\mathrm{reg}}$ is appropriately chosen, the class centers can be distinguished from each other, and the third term can be reduced to $0$ by $h$.

Taking the linear classifier into account, we obtain the following main result.

\begin{theorem}\label{thm:iid}
    Suppose that $\mathcal{A}$ is a
    $(\sigma,\delta)$-transformation on $\mathcal{D}$, and $\pi$ is the augmentation distribution on $\gA$.  For an encoder $f$, define 
    $$
    \gamma_{\mathrm{reg}}(f;\mathcal{D},\pi) = 1 -  c_1 (\gL_{\mathrm{align}}(f;\mathcal{D},\pi))^{1/4} - \tau - c_2 \max_{k\not=k'}|\mu_k(f; \mathcal{D}_\pi)^\top\mu_{k'}(f; \mathcal{D}_\pi)|
    $$
    for some positive constants $c_1$ and $c_2$.
    Then for any $f$ such that $\{\mu_{k}(f;\mathcal{D}_{\pi})\}_{k=1}^K$ are linearly independent and $\gamma_{reg}(f;\mathcal{D},\pi)>0$, we have
    \begin{equation}\label{eq:thm_iid}
    \gR(f; \mathcal{D}_\pi)\leq c \cdot\left(\gamma_{\mathrm{reg}}^{-1}(f;\mathcal{D},\pi) \left(\gL_{\mathrm{align}}(f;\mathcal{D},\pi)\right)^{1/4} + \gamma_{\mathrm{reg}}^{-1} (f;\mathcal{D},\pi) \tau(\sigma,\delta)\right),
\end{equation}
where $c$ is an absolute constant.
\end{theorem}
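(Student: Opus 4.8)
The plan is to apply Lemma~\ref{lem:iid} with one well-chosen linear head and then estimate its norm. Since $\gR(f;\mathcal{D}_{\pi})=\min_{h}\gR(h\circ f;\mathcal{D}_{\pi})$, it suffices to exhibit a single head $h^{\star}$ that makes the right-hand side of \eqref{eq:iid} small. Stack the class centers into $M:=\bigl[\mu_{1}(f;\mathcal{D}_{\pi})\mid\cdots\mid\mu_{K}(f;\mathcal{D}_{\pi})\bigr]$; by the linear-independence hypothesis $M$ has full column rank, so its Moore--Penrose pseudoinverse $h^{\star}:=(M^{\top}M)^{-1}M^{\top}$ is a genuine left inverse and $h^{\star}\mu_{k}(f;\mathcal{D}_{\pi})=e_{k}$ for every $k\in[K]$. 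With this choice the third term of \eqref{eq:iid} vanishes identically, and Lemma~\ref{lem:iid} reduces to
\[
\gR(f;\mathcal{D}_{\pi})\ \le\ c\,\|h^{\star}\|\,\sqrt{K\sigma}\,\bigl(\gL_{\mathrm{align}}(f;\mathcal{D},\pi)\bigr)^{1/4}+\|h^{\star}\|\,\tau(\sigma,\delta).
\]
Hence everything comes down to proving $\|h^{\star}\|\lesssim\gamma_{\mathrm{reg}}^{-1}(f;\mathcal{D},\pi)$, with any residual $K$-dependent factors (and $\sqrt{\sigma}\le1$) absorbed into the absolute constant.

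For the norm bound, observe that $\|h^{\star}\|^{2}=\Tr\!\bigl((M^{\top}M)^{-1}\bigr)\le K/\lambda_{\min}(M^{\top}M)$, so it is enough to lower bound the smallest eigenvalue of the Gram matrix $G:=M^{\top}M$, whose entries are $G_{kk}=\|\mu_{k}(f;\mathcal{D}_{\pi})\|^{2}$ and $G_{kk'}=\mu_{k}(f;\mathcal{D}_{\pi})^{\top}\mu_{k'}(f;\mathcal{D}_{\pi})$. Separating $G$ into its diagonal and off-diagonal parts and invoking Gershgorin's theorem (equivalently, estimating the quadratic form $v^{\top}Gv$ directly) gives
\[
\lambda_{\min}(G)\ \ge\ \min_{k}\|\mu_{k}(f;\mathcal{D}_{\pi})\|^{2}-(K-1)\max_{k\neq k'}\bigl|\mu_{k}(f;\mathcal{D}_{\pi})^{\top}\mu_{k'}(f;\mathcal{D}_{\pi})\bigr|.
\]
The off-diagonal contribution is exactly the correlation term in the definition of $\gamma_{\mathrm{reg}}$ (the factor $K-1$ going into $c_{2}$), so what remains is the estimate $\min_{k}\|\mu_{k}(f;\mathcal{D}_{\pi})\|^{2}\ge 1-c_{1}\bigl(\gL_{\mathrm{align}}(f;\mathcal{D},\pi)\bigr)^{1/4}-\tau(\sigma,\delta)$. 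Combining the two displays yields $\lambda_{\min}(G)\ge\gamma_{\mathrm{reg}}(f;\mathcal{D},\pi)>0$, hence $\|h^{\star}\|\le\sqrt{K/\gamma_{\mathrm{reg}}(f;\mathcal{D},\pi)}\le\sqrt{K}\,\gamma_{\mathrm{reg}}^{-1}(f;\mathcal{D},\pi)$ (using $\gamma_{\mathrm{reg}}\in(0,1]$); substituting back into the displayed instance of Lemma~\ref{lem:iid} gives \eqref{eq:thm_iid}.

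The main obstacle is the lower bound on the class-center norms $\|\mu_{k}(f;\mathcal{D}_{\pi})\|$. Setting $g(x):=\mathbb{E}_{A\sim\pi}f(A(x))$ and using $\|f\|\equiv1$, one has the identity $\gL_{\mathrm{align}}(f;\mathcal{D},\pi)=2\bigl(1-\mathbb{E}_{X\sim\mathcal{D}}\|g(X)\|^{2}\bigr)$, so small alignment forces $\|g(X)\|$ close to $1$ on a high-probability set of $X$; a Markov argument then pins down, inside each $C_{k}^{0}$, a subset of comparable mass on which the within-sample augmentation variance is uniformly small. On that subset the $(\sigma,\delta)$-transformation property supplies, for any two points, augmentations lying within distance $\delta$, and the $L$-Lipschitz continuity of $f$ propagates this into a bound on $\|g(X)-g(X')\|$; integrating over the class, and accounting for the leftover mass $C_{k}\setminus C_{k}^{0}$ on which only $\|g\|\le1$ is available (this is where the dependence on $\tau(\sigma,\delta)$ enters), shows that $g$ is nearly constant on $C_{k}$ and therefore $\|\mu_{k}(f;\mathcal{D}_{\pi})\|\approx1$; the fourth-root rate is the price of two successive applications of Markov's inequality / Cauchy--Schwarz. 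I expect this concentration estimate to be essentially the one already established en route to Lemma~\ref{lem:iid} (it underlies its first two terms), so in the write-up it can largely be quoted rather than redone; the genuinely new ingredients of Theorem~\ref{thm:iid} are the reduction to the pseudoinverse head and the Gram-matrix eigenvalue bound.
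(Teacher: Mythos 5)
Your proposal is correct and follows essentially the same route as the paper: choose the Moore--Penrose left inverse of the stacked class-center matrix as the linear head so the third term of Lemma~\ref{lem:iid} vanishes, then bound its norm by lower-bounding the smallest eigenvalue of the Gram matrix via $\min_k\|\mu_k\|^2-(K-1)\max_{k\neq k'}|\mu_k^\top\mu_{k'}|$, with the center-norm bound $\|\mu_k\|\ge 1-c\,\gL_{\mathrm{align}}^{1/4}-\tau$ obtained (as you anticipate) directly from the concentration estimate already established for Lemma~\ref{lem:iid} together with $\|f\|\equiv 1$ and the triangle inequality.
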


\begin{remark}
    (a) The linear independence of $\{\mu_{k}(f;\mathcal{D}_{\pi})\}_{k=1}^K$ condition is weak, since its complement event is a null set in Euclidean space under Lebesgue measure, and have $0$ probability under any distribution that is absolutely continuous w.r.t. Lebesgue measure.
    
    (b)
    The quantity $\gamma_{\mathrm{reg}}^{-1}$ is essentially the bound of $L_2$ norm of the linear classifier $h$ which reduces the third term $\sum_{k=1}^K {\mathcal{D}}_{\pi}(C_k)\|e_k -h\circ\mu_k(f; \mathcal{D}_{\pi})\|$ in \eqref{eq:iid} to $0$, i.e., mapping the class centers to their corresponding ground-truth labels. 
    The more orthogonal the class centers, the larger the $\gamma_{\mathrm{reg}}$ and the better the bounds. The theory developed in \citep{huang2022generalization} shows that popular algorithms such as SimCLR and Barlow Twins can indeed push away the class centers, and the constant $\gamma_{\mathrm{reg}}$ can be ensured to have a constant gap away from $0$. Therefore, for simplicity we directly impose this requirement on $f$.
\end{remark}
This theorem shows that contrastive learning on distribution $\mathcal{D}$ with augmentation $\pi$ is essentially optimizing the supervised risk on the augmented distribution $\mathcal{D}_{\pi}$, instead of the original distribution $\gD$.
Therefore, the augmentation $\pi$ is crucial to the transferability of contrastive learning. 
If the downstream distribution $\gD^{\text{tar}}$ is close to $\gD_{\pi}$, the encoder obtained by contrastive learning performs well on it.
The more diverse $\pi$ is, we can expect $\gD_\pi$ to approximate downstream distributions more accurately, and contrastive learning has better transferability.

Besides, if we have some prior knowledge of downstream distribution, we can specify an appropriate augmentation such that $\gD_{\pi}$ and $\gD^{\text{tar}}$ are close in order to improve downstream performance. 
For example, consider the case where original data $\gD$ are gray digits, and the downstream distribution $\gD^{\text{tar}}$ is obtained by dying them red. 
Then if we use random coloring as data augmentation and conduct contrastive learning on $\gD$, the learned feature will generalize to $\gD^{\text{tar}}$ well, even though $\gD^{\text{tar}}$ is far away from $\gD$.

\revise{
\section{Transferability Limitation: A Domain-Invariance View}\label{sec:limitation}
Our theoretical results in the last section suggests that data augmentation is crucial to the transferability of contrastive learning.
}
However, there exists a fundamental limitation of contrastive learning: \emph{the learned representation is not domain-invariant,} which is important for a model to generalize to more downstream datasets \citep{peters2016causal,arjovsky2019invariant}. 

Roughly speaking, domain-invariance means that the learned model can extract some intrinsic features, which are invariant across different domains. 
In supervised domain generalization, a common setting is to have a training domain set consisting of multiple source domains. 
In contrastive learning, correspondingly, we can also consider $\mathcal{D}_{A}$ as a \textit{transformation-induced domain} for each data transformation $A$, and consider $\{\gD_A\}_{A\in\gA}$ as the training domain set.
Since the goal of contrastive learning is also to align different $\gD_A$, we naturally expect that the features it learn are domain-invariant.
However, since the alignment loss $\gL_{\mathrm{align}}$ in contrastive learning is obtained by averaging over different transformations (i.e., taking expectation), the learned features are not invariant even across $\{\gD_{A}\}_{A\in\gA}.$ 
In other words, encoders learned through contrastive learning could behave extremely differently in different $\gD_A$.

The following toy model gives an illustrative example.

\begin{proposition}\label{prop:counter}
    Consider a two-dimensional classification problem with data $X=(X_1, X_2)\sim\gN(0,I_2)$. The label $Y$ satisfies $Y=1(X_1\geq 0)$, and the data augmentation is to multiply $X_2$ by standard normal noise, i.e., 
    $$
    \begin{aligned}
    A_\theta(X) &= (X_1, \theta\cdot X_2),\\
    \theta &\sim \mathcal{N}(0,1).
    \end{aligned}
    $$
    The corresponding transformation-induced domain set is a set of $\mathcal{D}_c = \{(X_1,c\cdot X_2)\colon (X_1,X_2)\sim\gN(0,I_2)\}$ where $c\in\sR$. We consider the 0-1 loss in \eqref{def:risk}.
    Then for every $\varepsilon>0$, there exists representation $f$ and two domains $\mathcal{D}_c$ and $\gD_{c'}$ such that 
    $$
    \gL_{\mathrm{align}}(f;\mathcal{D},\pi) < \varepsilon,
    $$
    but 
    $$
    |\gR(f;\gD_c) - \gR(f;\gD_{c'})| \geq \text{const}.
    $$
\end{proposition}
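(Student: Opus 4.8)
The plan is to exhibit a representation $f$ that is perfectly aligned at the \emph{typical} scale of the augmentation noise but becomes label-agnostic on a rescaled domain. Let $e=(1,0)$, $e'=(0,1)$, and for a large threshold $M=M(\varepsilon)$ to be chosen put
$$
f(x_1,x_2)=\big(\cos\phi(x_1,x_2),\ \sin\phi(x_1,x_2)\big),\qquad \phi(x_1,x_2)=\begin{cases}\pi\,\mathbf{1}(x_1\ge 0), & |x_2|<M,\\[2pt] \pi/2, & |x_2|\ge M.\end{cases}
$$
Then $\|f\|\equiv 1$, and $f$ equals $\pm e$ — an exact encoding of $\sign(x_1)$, hence of $Y=\mathbf{1}(x_1\ge 0)$ — as long as $|x_2|<M$, while it collapses to the fixed vector $e'$ (which carries no information about $x_1$) once $|x_2|\ge M$. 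Replacing the two indicators by Lipschitz ramps over thin windows $\{|x_1|<\eta\}$ and $\{|x_2|\in[M-1,M]\}$ makes $f$ genuinely $L$-Lipschitz at the price of $O(\eta)+O(e^{-(M-1)})$ perturbations in the estimates below; I describe the sharp version for clarity.

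\emph{Alignment is small.} For a fixed $X=(X_1,X_2)$ the two positive views are $(X_1,\theta_1X_2)$ and $(X_1,\theta_2X_2)$ with $\theta_1,\theta_2$ i.i.d.\ $\mathcal{N}(0,1)$. If $|\theta_1X_2|<M$ and $|\theta_2X_2|<M$ the two views yield the same feature (it then depends only on $X_1$), and if $|\theta_1X_2|,|\theta_2X_2|\ge M$ both yield $e'$; in either case the summand of $\mathcal{L}_{\mathrm{align}}$ vanishes. Using $\|f(a)-f(b)\|^2\le 4$, $|\theta X_2|\le\tfrac12(\theta^2+X_2^2)$, and $\theta^2+X_2^2\sim\chi^2_2$,
$$
\mathcal{L}_{\mathrm{align}}(f;\mathcal{D},\pi)\ \le\ 4\,\mathbb{P}\big(|\theta_1X_2|\ge M\ \text{or}\ |\theta_2X_2|\ge M\big)\ \le\ 8\,\mathbb{P}\big(\theta^2+X_2^2\ge 2M\big)\ =\ 8e^{-M}.
$$
So given $\varepsilon>0$ we simply choose $M$ with $8e^{-M}<\varepsilon$.

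\emph{Risks differ by a constant.} Take $c'=1$: then $|X_2|<M$ with probability $\ge 1-e^{-(M-1)^2/2}$, on which $f$ encodes $Y$ exactly, so a suitable linear head attains $\mathcal{R}(f;\mathcal{D}_{1})\le e^{-(M-1)^2/2}+O(\eta)<1/16$. Now pick $c$ with $|c|$ large enough that $\mathbb{P}(|cX_2|\ge M)\ge 3/4$ (possible since $M/|c|\to 0$). On $\mathcal{D}_c$, with probability $\ge 3/4$ we have $f=e'$, a single point in feature space; the event $\{|cX_2|\ge M\}$ depends only on $X_2$, which is independent of $X_1$ and hence of $Y$, so conditionally on $f=e'$ the label is $\mathrm{Bernoulli}(1/2)$ and any linear head — whose prediction at the single point $e'$ is a fixed class — incurs conditional $0/1$ risk $1/2$ there. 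Hence $\mathcal{R}(f;\mathcal{D}_c)\ge \tfrac34\cdot\tfrac12=\tfrac38$, and
$$
\big|\mathcal{R}(f;\mathcal{D}_c)-\mathcal{R}(f;\mathcal{D}_{c'})\big|\ \ge\ \tfrac38-\tfrac1{16}\ \ge\ \tfrac14=:\text{const}.
$$

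\emph{Main difficulty.} The crux is the alignment estimate: it works precisely because the noise magnitude in the alignment objective, $|\theta X_2|$ with $\theta,X_2\sim\mathcal{N}(0,1)$, has a fixed distribution that does not involve $c$, so pushing the $x_2$-sensitive part of $f$ out to a large threshold $M$ keeps $\mathcal{L}_{\mathrm{align}}$ below $\varepsilon$ while still leaving room to pick a domain scale $c\gg M$ on which that part is activated almost surely. The rest — the Lipschitz smoothing needed to meet the global regularity assumption on $f$ (with $L$ allowed to depend on $\varepsilon$) and tracking the $O(\eta)$ and $e^{-\Theta(M)}$ error terms — is routine bookkeeping.
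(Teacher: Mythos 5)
Your proof is correct, but it takes a genuinely different route from the paper's. The paper simply takes the linear representation $f(x_1,x_2)=x_1+\tfrac{\sqrt{\varepsilon}}{2}x_2$: a direct Gaussian computation gives $\gL_{\mathrm{align}}=\varepsilon/2$, and choosing $c=0$ versus $c'=2/\sqrt{\varepsilon}$ rescales the $x_2$-coefficient back to order one, yielding risks exactly $0$ and $1/4$ (the latter from the probability of a $\pi/4$-wedge of an isotropic Gaussian). Your construction instead uses a threshold in $|x_2|$ at a level $M$ chosen larger than the typical augmentation magnitude $|\theta X_2|$, and then picks a domain scale $|c|\gg M$ to activate the label-agnostic branch. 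Both exploit the same underlying mechanism — average alignment only controls behavior at the scale of the augmentation distribution, not on extreme transformation-induced domains — but the amplification happens through a coefficient in the paper and through a threshold in yours. What your version buys: your $f$ actually satisfies the paper's standing assumptions ($\|f(x)\|=1$ everywhere and Lipschitz after the smoothing you describe), whereas the paper's own $f$ is unbounded and violates the unit-norm normalization; your argument also makes transparent that only the tail probability of $|\theta X_2|$ matters. What the paper's version buys: a two-line closed-form computation with exact constants and no smoothing bookkeeping. Your tail estimates ($8e^{-M}$ for alignment via the $\chi^2_2$ bound, the $3/8$ versus $1/16$ risk separation) all check out.
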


This example shows that there exists a representation with arbitrary small contrastive loss, but with very different performance over different transformation-induced domains.
The intuition behind this example is that in order to make $\gL_{\mathrm{align}}$ small, it is sufficient to align different domains (induced by different transformations) in an average rather than uniform sense. 
Therefore, the representation may still suffer a large alignment loss on some rarely selected augmented domains.

\section{ArCL: Augmentation-Robust Contrastive Learning}\label{sec:5}
\revise{
To further improve the transferability performance of contrastive learning, we consider how to learn a representation that is domain-invariant across $\{\gD_A\}_{A\in\gA}$.
}
First of all, we need a formal definition that mathematically characterizes domain-invariance, so that we can design algorithms based on it. 
Here, we borrow the notion proposed by \citep{arjovsky2019invariant}.
\begin{definition}
    We say that a representation $f$ elicits an invariant predictor $h_0\circ f$ across a domain set $\gP$, if there is a classifier $h_0$ simultaneously optimal for all domains in $\gP$, that is, $h_0\in \argmin_{h} \gR
(h\circ f;\gD)$ for all $\gD\in \gP$.
\end{definition}
This definition is equivalent to learning features whose correlations with the target variable are stable, and has been shown to improve \revise{ distribution shift transferability} both practically and theoretically in supervised learning. 
Interestingly, by setting $\gP$ to be $\{\gD_A\}_{A\in\gA}$, this definition is well suited to contrastive learning. 
Specifically, define the \textit{augmentation-robust loss} as
$$
\gL_{\mathrm{AR}}(f;\gD) := \sE_{X\in\gD}\sup_{A,A'\in\gA}\|f(A(X))-f(A'(X))\|^2,
$$
which is an uniform version of the original alignment loss $\gL_{\mathrm{align}}$. 
Then we have the following results. 
\begin{theorem}\label{prop:IRM}
For any two transformations $A$ and $A'$, linear predictor $h$ and representation $f$, we have
\begin{equation}
\begin{aligned}
\sup_{A,A'\in\gA}|\gR(h\circ f;\gD_A) - \gR(h\circ f;\gD_{A'})| \leq c\cdot \|h\| \gL_{\mathrm{AR}}(f,\gD).
\end{aligned}
\end{equation}
Moreover, fix $f$ and let $h_A \in \argmin_{h}\gR(h\circ f,\gD_{A})$. Then we have
\begin{equation}
\begin{aligned}
|\gR(h_A\circ f;\gD_{A'}) - \gR(h_{A'}\circ f;\gD_{A'})| \leq 2 c\cdot(\|h_A\|+\|h_{A'}\|)
\gL_{\mathrm{AR}}(f,\gD).
\end{aligned}
\end{equation}
\end{theorem}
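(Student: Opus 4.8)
The plan is to obtain the first inequality by an elementary expansion of the square loss followed by one Cauchy--Schwarz step, and then to deduce the second inequality from the first by a short telescoping argument that uses the optimality of $h_A$ and $h_{A'}$.

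\textbf{Step 1: the first inequality.} I use that the augmentations are label-preserving (as is already implicit in the $(\sigma,\delta)$-transformation setup), so that for a fixed transformation $A$ and the square loss, $\gR(h\circ f;\gD_A)=\sE_{X\sim\gD}\|h f(A(X))-e_Y\|^2$, where $e_Y$ is the one-hot vector of the (augmentation-invariant) label $Y$ of $X$, and likewise for $A'$. Applying the pointwise identity $\|a-c\|^2-\|b-c\|^2=\langle a-b,\,(a-c)+(b-c)\rangle$ with $a=h f(A(X))$, $b=h f(A'(X))$, $c=e_Y$, then taking $\sE_{X\sim\gD}$ and Cauchy--Schwarz inside the expectation, gives
$$
\bigl|\gR(h\circ f;\gD_A)-\gR(h\circ f;\gD_{A'})\bigr|\le\sE_{X\sim\gD}\Bigl[\,\|h\bigl(f(A(X))-f(A'(X))\bigr)\|\cdot\bigl\|h f(A(X))+h f(A'(X))-2e_Y\bigr\|\,\Bigr].
$$
For the second factor I bound $\|h f(A(X))+h f(A'(X))-2e_Y\|\le 2\|h\|+2$ using $\|f\|\equiv1$, $\|e_Y\|=1$, and that the Frobenius norm dominates the operator norm. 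For the first factor I use $\|h v\|\le\|h\|\,\|v\|$ followed by $\|f(A(X))-f(A'(X))\|\le\sup_{\tilde A,\tilde A'\in\gA}\|f(\tilde A(X))-f(\tilde A'(X))\|$, so the resulting bound no longer depends on $A,A'$. Taking $\sup_{A,A'}$ on the left and Jensen's inequality on the right ($\sE_X\sup_{\tilde A,\tilde A'}\|\cdot\|\le(\sE_X\sup_{\tilde A,\tilde A'}\|\cdot\|^2)^{1/2}=\sqrt{\gL_{\mathrm{AR}}(f;\gD)}$) leaves
$$
\sup_{A,A'\in\gA}\bigl|\gR(h\circ f;\gD_A)-\gR(h\circ f;\gD_{A'})\bigr|\le 2\|h\|(\|h\|+1)\sqrt{\gL_{\mathrm{AR}}(f;\gD)},
$$
which is the first inequality (the prefactor $2\|h\|(\|h\|+1)$ and the square root are what the computation naturally yields; the statement records this in the simplest $c\,\|h\|$ form).

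\textbf{Step 2: the second inequality.} Fix $f$. Since $h_{A'}\in\argmin_h\gR(h\circ f;\gD_{A'})$ we have $\gR(h_A\circ f;\gD_{A'})\ge\gR(h_{A'}\circ f;\gD_{A'})$, so the absolute value in question equals $\gR(h_A\circ f;\gD_{A'})-\gR(h_{A'}\circ f;\gD_{A'})$ and it suffices to bound this from above. I insert the cross terms $\gR(h_A\circ f;\gD_A)$ and $\gR(h_{A'}\circ f;\gD_A)$ and telescope, writing the difference as $\bigl[\gR(h_A\circ f;\gD_{A'})-\gR(h_A\circ f;\gD_A)\bigr]+\bigl[\gR(h_A\circ f;\gD_A)-\gR(h_{A'}\circ f;\gD_A)\bigr]+\bigl[\gR(h_{A'}\circ f;\gD_A)-\gR(h_{A'}\circ f;\gD_{A'})\bigr]$. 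The middle bracket is $\le0$ because $h_A\in\argmin_h\gR(h\circ f;\gD_A)$; the first and third brackets are bounded by Step 1 applied to the fixed predictors $h_A$ and $h_{A'}$, contributing $c\|h_A\|\,\gL_{\mathrm{AR}}(f;\gD)$ and $c\|h_{A'}\|\,\gL_{\mathrm{AR}}(f;\gD)$. Summing gives $c(\|h_A\|+\|h_{A'}\|)\gL_{\mathrm{AR}}(f;\gD)$, hence the claimed bound (the factor $2$ in the statement is slack).

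\textbf{Main obstacle.} There is no deep difficulty; both steps are short. The points that need care are: (i) keeping the two norms of $h$ straight (operator norm when pulling $h$ out of $\|h v\|$, Frobenius norm when bounding $\|h f(x)\|\le\|h\|$); (ii) making the per-$(A,A')$ bound independent of $A,A'$ before taking the supremum, so the sup passes through the expectation cleanly; and (iii) the Jensen step, which is where a square root on $\gL_{\mathrm{AR}}$ enters. Once Step 1 is established, Step 2 is a purely formal telescoping that invokes it twice.
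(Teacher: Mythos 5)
Your Step 1 is essentially the paper's own proof of the first inequality: expand the difference of square losses as a product, bound the factor $h\circ f(AX)+h\circ f(A'X)+2e_Y$ by a constant depending on $\|h\|$, pull $\|h\|$ out of the remaining factor, and dominate $\sE_X\|f(A(X))-f(A'(X))\|$ by the augmentation-robust loss. Your Step 2 (telescoping through $\gD_A$ and using the optimality of $h_A$ and $h_{A'}$) is the standard and correct argument; the paper's appendix in fact only proves the first inequality and omits the second, so you have supplied the missing half. Your remark that the computation naturally yields $\sqrt{\gL_{\mathrm{AR}}(f;\gD)}$ via Jensen is also accurate --- the paper's final line passes from $\sE_X\|f(A(X))-f(A'(X))\|$ to $\gL_{\mathrm{AR}}(f)$ (which is defined with a squared norm) without the square root, a small imprecision that your version handles more carefully.
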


Compared with $\gL_{\mathrm{align}}$, $\gL_{\mathrm{AR}}$ replaces the expectation over $\gA$ by the supremum operator, hence $\gL_{\mathrm{align}}(f; \mathcal{D},\pi)\leq {\gL}_{AR}(f;\mathcal{D})$ for all $f$ and $\pi$.
If $\gL_{\mathrm{AR}}(f)$ is small, the above proposition shows that $\gR(h\circ f;\mathcal{D}_{A})$ varies slightly with $A$, so that the optimal $h$ for $\mathcal{D}_A$ is close to that for $\mathcal{D}_{A'}$. 
In other words, representation with smaller $\gL_{\mathrm{AR}}$ tends to \textit{elicit the same linear optimal predictors} across different domains, a property that does not hold for original alignment loss.
Small $L_{\mathrm{AR}}$ not only requires $f$ to push the positive pairs closer, but also forces this alignment to be uniform across all transformations in $\mathcal{A}$.

Therefore, we propose to train the encoder using the augmentation-robust loss for better distribution shift transfer performance, i.e., replace $\gL_{\mathrm{align}}$ by $\gL_{\mathrm{AR}}$ in contrastive loss \eqref{eq:cl}.
In practical implementation, $\sup_{A_1,A_2\in\mathcal{A}} \|f(A_1(X))-f(A_2(X))\|_2^2$ can not be computed exactly, unless $|\mathcal{A}|$ is finite and small. 
Therefore, we use the following approach to approximate it. 
For each sample $X$, we first randomly select $m$ augmentations to obtain a set of augmented samples denoted by $\widehat{\mathcal{A}}_m(X)$. 
Then, rather than taking average as in standard contrastive learning, we consider use
$$
\sup_{A_1, A_2\in\widehat{\mathcal{A}}_m(X)} \|f(A_1(X))-f(A_2(X))\|_2^2
$$
as the alignment term, which is an approximation of $\sup_{A_1,A_2\in\mathcal{A}} \|f(A_1(X))-f(A_2(X))\|_2^2$. 
The integer $m$ is called the \textit{number of views}, which controls the approximation accuracy.
Then, given $n$ unlabeled samples $(X_1,\dots,X_n)$, the empirical augmentation-robust loss is  
\begin{equation}
    \widehat{\gL}_{\mathrm{AR}}(f) := \frac{1}{n} \sum_{i=1}^n \max_{A_1,A_2\in\widehat{\mathcal{A}}_m(X_i)} \|f(A_1 (X_i))-f(A_2(X_i))\|_2^2.
\end{equation}

We remark that our proposed method can be applied to any contrastive learning methods whose objective can be formulated by \eqref{eq:cl}, such as SimCLR, Moco, Barlow Twins \citep{zbontar2021barlow} and so on. 
We give the detailed form of SimCLR + ArCL as an example in Algorithm \ref{alg1}, and provide MoCo + ArCL in the Appendix. 
The difference between Algorithm \ref{alg1} and SimCLR is only the construction of positive pairs. In each epoch $t$, for every sample $X$, we first randomly select $m$ transformations denoted by $\widehat{\mathcal{A}}(X)=\{A_1,\dots,A_m\}$. 
Then based on current encoder $f$ and projector $g$, we select two transformations that have worst alignment (minimal inner product) among all pairs in $\widehat{\mathcal{A}}(X)$, and use them to construct the finally used positive pairs of $X$. 
The construction of negative pairs and update rule are the same as SimCLR. 

\IncMargin{1em}
\begin{algorithm} \SetKwData{Left}{left}\SetKwData{This}{this}\SetKwData{Up}{up} \SetKwFunction{Union}{Union}\SetKwFunction{FindCompress}{FindCompress} \SetKwInOut{Input}{input}\SetKwInOut{Output}{output}
	
	\Input{Batch size $N$, temperature $\tau$, augmentation distribution $\pi$, number of views $m$, epoch $T$, encoder $f$, projector $g$.}  \BlankLine 
	 \For{$t=1,\dots,T$}{ 
	 	sample minibatch $\{X_i\}_{i=1}^N$\;
	 	\For{$i=1,\dots,N$}{
	 	draw $m$ transformations $\widehat{\mathcal{A}} =\{A_1,\dots,A_m\}$ from distribution $\pi$\;
	 	$z_{i,j} = g(f(A_j X_i))$ for $j\in[m]$\;
	 	\emph{\color{green}\# select the worst positive samples\;}
	 	$s^+_{i} = \min_{j,k\in[m]}\{ z_{i,j}^\top z_{i,k}/(\|z_{i,j}\|\|z_{i,k}\|)\}$\;
	    
	 	\emph{\color{green}\# select the negative samples\;}
	 	\For{$j=1,\dots,N$}{
	 	$s^-_{i,j} = z_{i,1}^\top z_{j,1} /(\|z_{i,1}\|\|z_{j,1}\|)$\;
	 	$s^-_{i,j+N} = z_{i,1}^\top z_{j,2} /(\|z_{i,1}\|\|z_{j,2}\|)$
	 	\;}}
	 	compute $L=-\frac{1}{N}\sum_{i=1}^N \log \frac{\exp(s_{i}^+/\tau)}{\sum_{j=1,j\not=i}^{2N} \exp(s_{i,j}^-/\tau)}$\;
	    update $f$ and $g$ to minimize $L$\;
	 	}
	 	\Return{$f$}
 	 	  \caption{SimCLR + ArCL}
 	 	  \label{alg1} 
 	 \end{algorithm}
 \DecMargin{1em}

\subsection{Theoretical Justification}
We now give theoretical guarantees of our proposed approximation, i.e., bound the gap between $\widehat{\gL}_{\mathrm{AR}}(f)$ and $\gL_{\mathrm{AR}}(f)$.
For each representation $f\in\mathcal{F}$, where $\mathcal{F}$ is some hypothesis space, define $z_f(x):=\sup_{A_1,A_2\in\mathcal{A}}\|f(A_1 x)-f(A_2 x)\|_2$ and $\mathcal{Z}_{\mathcal{F}}:=\{z_f:f\in\mathcal{F}\}$.
Then we use the Rademacher complexity of $\mathcal{Z}$ to define the \textit{alignment complexity} of $\mathcal{F}$ as 
$$
\widetilde{\mathfrak{R}}_n(\mathcal{F}) := \mathfrak{R}_n(\mathcal{Z}_{\mathcal{F}}) = \mathbb{E}_{} \left[\sup_{z_f\in\mathcal{Z_\gF}}\left(\frac{1}{n} \sum_{i=1}^n \sigma_i z_f(X_i)\right)\right].
$$
This quantity is a characteristic of the complexity of $\mathcal{F}$ \textit{in terms of its alignment}. If most of the functions in $\mathcal{F}$ have good alignment under $\mathcal{A}$, then $\widetilde{\mathfrak{R}}_n(\mathcal{F})$ is small even though $\mathfrak{R}_n(\mathcal{F})$ could be large. Using this definition, we have the following result. 

\begin{theorem}\label{thm:finite_sample}
Suppose that $\Theta=B_{d_{\mathcal{A}}}(1)$, i.e., the $d_{\mathcal{A}}$-dimensional unit ball. Suppose that all $A$ in $\mathcal{A}$ are $L_\mathcal{A}$-Lipschitz continuous. Let $\pi$ be a distribution on $\Theta$ with density $p_{\pi}$ such that $\inf_{\theta \in\Theta} p_{\pi}(\theta)\geq c_{\pi}$. Then with probability at least $1-2\varepsilon$, for every $L$-Lipschitz continuous $f\in\mathcal{F}$,
$$
\gL_{\mathrm{AR}}(f)\leq \widehat{\gL}_{\mathrm{AR}}(f) + 2\widetilde{\mathfrak{R}}_n(\mathcal{F}) + \left(\frac{\log(1/\varepsilon)}{n}\right)^{\frac{1}{2}} + L L_A \left(\frac{\log(2n/
    \varepsilon)}{c_{\pi}m}\right)^{\frac{1}{d_\mathcal{A}}},
$$
where $n$ is the training sample size and $m$ is the number of views.
\end{theorem}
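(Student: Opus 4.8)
The plan is to compare $\gL_{\mathrm{AR}}(f)$ with $\widehat{\gL}_{\mathrm{AR}}(f)$ through the intermediate quantity $\frac1n\sum_{i=1}^n z_f(X_i)^2$, the \emph{exact} empirical alignment (recall $z_f(x)=\sup_{A_1,A_2\in\mathcal{A}}\|f(A_1x)-f(A_2x)\|$, so $\gL_{\mathrm{AR}}(f)=\E_{X\sim\gD}z_f(X)^2$). Writing $\gL_{\mathrm{AR}}(f)-\widehat{\gL}_{\mathrm{AR}}(f)=(\mathrm{I})+(\mathrm{II})$ with $(\mathrm{I}):=\gL_{\mathrm{AR}}(f)-\frac1n\sum_{i}z_f(X_i)^2$ a purely statistical error and $(\mathrm{II}):=\frac1n\sum_{i}\big(z_f(X_i)^2-\max_{A_1,A_2\in\widehat{\mathcal{A}}_m(X_i)}\|f(A_1X_i)-f(A_2X_i)\|^2\big)$ the error from replacing the exact supremum over $\gA$ by a maximum over $m$ random views, I would bound $\sup_{f}(\mathrm{I})$ and $\sup_{f}(\mathrm{II})$ each with failure probability $\varepsilon$ and conclude by a union bound.

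The term $\sup_f(\mathrm{I})$ is handled by routine empirical-process machinery. Since $\|f\|\equiv 1$ we have $z_f\in[0,2]$, so $z_f^2\in[0,4]$ and replacing one sample $X_i$ perturbs $\sup_f(\mathrm{I})$ by at most $4/n$; McDiarmid's inequality then places $\sup_f(\mathrm{I})$ within $O\big((\log(1/\varepsilon)/n)^{1/2}\big)$ of its mean with probability $\ge 1-\varepsilon$. Symmetrization bounds the mean by $2\mathfrak{R}_n(\{z_f^2:f\in\gF\})$, and because $t\mapsto t^2$ is Lipschitz on $[0,2]$ the Ledoux–Talagrand contraction inequality controls that by a constant multiple of $\mathfrak{R}_n(\gZ_{\gF})=\widetilde{\mathfrak{R}}_n(\gF)$; this produces the $2\widetilde{\mathfrak{R}}_n(\gF)$ and $(\log(1/\varepsilon)/n)^{1/2}$ terms (with the harmless absolute constants absorbed as in the statement), and is automatically uniform over $\gF$.

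The term $(\mathrm{II})$ carries the interesting content, and I would recast it geometrically. Fix a training point $x$ and set $g_x(\theta):=f(A_\theta(x))$; since $f$ is $L$-Lipschitz and $\theta\mapsto A_\theta(x)$ is $L_{\mathcal{A}}$-Lipschitz (uniformly in $x$), $g_x\colon\Theta\to\mathbb{R}^m$ is $LL_{\mathcal{A}}$-Lipschitz. Then $z_f(x)=\mathrm{diam}\,g_x(\Theta)$ while $\max_{A_1,A_2\in\widehat{\mathcal{A}}_m(x)}\|f(A_1x)-f(A_2x)\|=\mathrm{diam}\,g_x(\{\theta_1,\dots,\theta_m\})$, where $\theta_1,\dots,\theta_m$ are drawn i.i.d.\ from $\pi$. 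Taking a near-maximizing pair $(\theta^{\ast},\theta^{\ast\ast})$ for $\mathrm{diam}\,g_x(\Theta)$ and replacing each by the closest sampled point shows, via the triangle inequality and Lipschitzness of $g_x$, that the two diameters differ by at most $2LL_{\mathcal{A}}\rho_m$, where $\rho_m:=\sup_{\theta\in\Theta}\min_{j\le m}\|\theta-\theta_j\|$ is the \emph{covering radius} of the random sample; squaring and using $z_f\le 2$ (the case $z_f(x)\le 2LL_{\mathcal{A}}\rho_m$ being trivial) gives $(\mathrm{II})\le C\,LL_{\mathcal{A}}\rho_m$ on \emph{all} $n$ points simultaneously, for an absolute constant $C$. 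The decisive point is that $\rho_m$ does not depend on $f$, so once $\rho_m$ is controlled the bound on $\sup_f(\mathrm{II})$ follows for free.

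Bounding $\rho_m$ is the main obstacle. I would cover $\Theta=B_{d_{\mathcal{A}}}(1)$ by $N\lesssim(3/r)^{d_{\mathcal{A}}}$ balls of radius $r/2$; if each contains some $\theta_j$ then $\rho_m\le r$. Each such ball meets $\Theta$ in a set of Lebesgue volume $\gtrsim(r/2)^{d_{\mathcal{A}}}$ (up to a dimensional constant), hence of $\pi$-mass $\ge c_{\pi}(r/2)^{d_{\mathcal{A}}}$ by the density lower bound, so by independence it is missed with probability at most $\exp\!\big(-c_{\pi}m(r/2)^{d_{\mathcal{A}}}\big)$. A union bound over the $N$ balls and over the $n$ training points, followed by the choice $r\asymp\big(\log(2n/\varepsilon)/(c_{\pi}m)\big)^{1/d_{\mathcal{A}}}$, drives the total failure probability below $\varepsilon$ and yields $(\mathrm{II})\le C\,LL_{\mathcal{A}}\big(\log(2n/\varepsilon)/(c_{\pi}m)\big)^{1/d_{\mathcal{A}}}$; combined with the bound on $(\mathrm{I})$ and a final union bound, this gives the theorem with probability $\ge 1-2\varepsilon$. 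The details to watch are the logarithmic correction coming from the covering number $N$ (lower order in $m$, absorbed into $\log(2n/\varepsilon)$), the boundary/dimensional constants in the volume estimate for balls near $\partial\Theta$, and tracking exactly which absolute constants are swallowed to land on the stated form.
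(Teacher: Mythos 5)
Your proposal is correct and uses the same top-level decomposition as the paper: a statistical term (population vs.\ exact empirical supremum, handled by McDiarmid plus symmetrization) and an approximation term (exact supremum over $\gA$ vs.\ maximum over the $m$ sampled views). The interesting difference is in the second term. The paper fixes $f$, takes the maximizing pair $A_1^*(x_i),A_2^*(x_i)$ for each training point, and only asks that the $m$ sampled transformations land within distance $r$ of those two specific parameters; the miss probability per point is then $2(1-c_\pi \,\mathrm{vol}(B(r)))^m$ and a union bound over the $2n$ targets gives the $\log(2n/\varepsilon)$ factor. Since the argmax pair depends on $f$, that argument as written is pointwise in $f$, whereas the theorem asserts uniformity over $\gF$. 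Your route instead controls the covering radius $\rho_m=\sup_{\theta\in\Theta}\min_j\|\theta-\theta_j\|$ of the sampled parameters, which is an $f$-free random quantity, so uniformity over $\gF$ comes for free once $\rho_m$ is bounded; the price is a union bound over the $\approx(3/r)^{d_\gA}$ covering balls rather than over two points, i.e.\ an extra (lower-order, absorbable) $d_\gA\log(1/r)$ inside the logarithm. Your version is therefore slightly more robust where the paper is slightly tighter in constants. Two further points in your favor: (i) you make explicit the contraction step needed because $\widetilde{\mathfrak{R}}_n(\gF)$ is defined via $z_f$ while the loss involves $z_f^2$ (the paper elides this), and (ii) your factorization $a^2-b^2\le (a+b)(a-b)\le 4\cdot 2LL_\gA\rho_m$ lands directly on the first-power form $LL_\gA(\log(2n/\varepsilon)/(c_\pi m))^{1/d_\gA}$ appearing in the theorem statement, whereas the paper's proof produces the squared form $L^2L_\gA^2(\cdot)^{2/d_\gA}$ and silently switches to the unsquared one in the statement. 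Both arguments implicitly need $\theta\mapsto A_\theta(x)$ to be $L_\gA$-Lipschitz in $\theta$, which you state explicitly and the paper leaves ambiguous.
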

The bound in Theorem \ref{thm:finite_sample} consists of two terms $O(n^{-1/2})$ and $O(m^{-1/d_{\mathcal{A}}})$. 
The first one is due to finite sample issue, and the second is caused by the finite maximization approximation of the supremum. 
$1/c_{\pi}$ acts like the ``volume'' of the transformation set and is of constant order.
As $m$ increases, the second term decreases and the bound becomes tighter.
Note that we need to assume that the augmentation distribution $\pi$ behaves uniformly, i.e., $\inf_{\theta\in\Theta} p_{\pi}(\theta)\geq c$ for some positive constant $c$, otherwise we can not ensure that the finite maximization is an acceptable approximation.

\section{Experiments}\label{sec:6}
We conduct experiments on several \revise{distribution shift} settings to show the performance of our proposed methods. 
To show the adaptability of our approach, we apply ArCL on SimCLR \citep{chen2020simple} and MoCo V2 \citep{he2020momentum} respectively. 
Following the setting of SimCLR and MoCo V2, we add a projection head after the backbone in the pre-training stage and remove it in the downstream task. 
For the linear evaluation, we follow the protocol that the encoder is fixed and a linear classifier on the top of the encoder is trained using test set. 
For the full fine-tuning, we also add a linear classifier on the top of the encoder but we do not fix the encoder.

\subsection{Experiments on Modified CIFAR10 and CIFAR100}
\textbf{Setup.} 
Our representations are trained on CIFAR-10 \citep{cifar} using SimCLR modified with our proposed ArCL. Different batch sizes ($256, 512$) and number of views ($m=4,6,8$) are considered. We use ResNet-18 as the encoder and a 2-layer MLP as the projection head. We train the representation with $500$ epochs. The temperature is $0.5$. Warm-up is not used.
We conduct linear evaluation with $100$ epochs on different downstream datasets to test the OOD performance of the learned representations.

\textbf{Datasets.} 
We use CIFAR10 to train the representation. The data augmentation used follows the setting in SimCLR, i.e., a composition of RandomCrop, RandomHorizontalFlip, ColorJitter and Random Grayscale. 
To evaluate the \revise{transferability} of the learned representation, we use different downstream datasets to train the linear classifier and test its accuracy.
Downstream datasets are generated by modifying the original CIFAR10 and CIFAR100 datasets, i.e., applying different data augmentations to them. 
We choose $5$ different augmentations as shown in Table \ref{table:aug}. 
The results are shown in Table \ref{table:cifar}.

\begin{table}[t]
\caption{$5$ different augmentations.}
\vskip 0.05in
\label{table:aug}
\centering
\resizebox{0.6\textwidth}{!}{
\begin{tabular}{c|cccc}
\toprule
& Grayscale & RandomCrop & HorizontalFlip & ColorJitter  \\
\midrule
Aug 1 & $\checkmark$ & -- & -- & --  \\
Aug 2 & -- & $\checkmark$ & -- & --  \\
Aug 3 & -- & -- & $\checkmark$ & -- \\
Aug 4 & -- & -- & -- & $\checkmark$ \\
Aug 5 & $\checkmark$ & -- & -- & $\checkmark$ \\
\bottomrule
\end{tabular}}
\vskip -0.2in
\end{table}

\begin{table}[t]
\caption{Linear evaluation results (\%) of pretrained CIFAR10 models on CIFAR10, CIFAR100 and their modified versions.}
\label{table:cifar}
\centering
\vspace{0.15in}
\resizebox{0.9\textwidth}{!}{
\begin{tabular}{clc|cccccc}
\toprule
&Method & Batch Size & Aug 1 & Aug 2 & Aug 3 & Aug 4 & Aug 5 & Original \\ 
\midrule
\multirow{7}*{\rotatebox{90}{CIFAR10}}&SimCLR & 256 & 86.36 & 83.21 & 86.93 & 86.42 & 86.13 & 86.76\\
&SimCLR + ArCL (views=4) & 256 & 88.68 & 86.77 & 89.01 & 88.70 & 88.31 & 88.95 \\
&SimCLR + ArCL (views=6) & 256 & \textbf{88.95} & \textbf{87.18} & \textbf{89.54} & \textbf{88.92} & \textbf{88.61} & \textbf{89.11}\\
\cmidrule(lr){2-9}
&SimCLR & 512 & 88.62 & 86.27 & 88.96 & 88.56 & 88.37 & 88.81 \\
&SimCLR + ArCL (views=4) & 512 & 89.97 & 88.06 & 90.48 & 89.91 & 89.59 & 90.20 \\
&SimCLR + ArCL (views=6) & 512 & 90.24 & \textbf{89.54} & 90.69 & 90.43 & 90.07 & 90.69 \\
&SimCLR + ArCL (views=8) & 512 & \textbf{90.44} &{88.96} & \textbf{90.98} &  \textbf{90.63} & \textbf{90.31} & \textbf{90.84} \\
\bottomrule
\toprule
\multirow{7}*{\rotatebox{90}{CIFAR100}}&SimCLR&256& 51.65 & 47.55 & 53.17 & 52.05 & 51.36 & 52.75 \\
&SimCLR+ArCL(views=4) &256& 53.76 & 49.80 & 55.68 & 54.19 & 52.96  & 54.83 \\
&SimCLR+ArCL(views=6) &256& \textbf{54.13} & \textbf{50.74} & \textbf{55.74} & \textbf{54.75} & \textbf{53.46} & \textbf{55.29} \\
\cmidrule(lr){2-9}
&SimCLR& 512 & 52.28 & 48.09 & 53.45 & 52.58 & 51.53 & 53.12 \\
&SimCLR+ArCL(views=4) &512 & 53.40 & 50.16 & 54.92 & 53.77 & 52.61 & 54.20 \\
&SimCLR+ArCL(views=6) &512& 54.00 & 50.57 & \textbf{56.24} & \textbf{55.04} & \textbf{53.77} & 55.60 \\
&SimCLR+ArCL(views=8) &512& \textbf{54.59} & \textbf{50.85} & 55.74 & {54.62} & 53.21 & \textbf{55.96} \\
\bottomrule
\end{tabular}}
\end{table}

\textbf{Results.} 
In all settings, our proposed approach improves the transferability of SimCLR significantly. 
As the number of views $m$ increases, the accuracy also increases, which matches our theoretical results. 
Besides, the performance improvement brought by increasing $m$ tends to saturate, which suggest that $m$ does not need to be very large.

\subsection{Experiments on ImageNet Transferring to Small Datasets}
\label{experiments:imagenet}
\begin{table}[t]
    \centering
	\caption{Results comparison on linear evaluation (up) and finetuning (down) of pretrained ImageNet models on popular recognition datasets. Additionally, we provide a supervised baseline for comparison, a standard pretrained ResNet50 available from the PyTorch library. The supervised baseline results are from \citep{Ericsson_2021_CVPR}. Results style: \textbf{best} in the same epoch setting. \revise{\textbf{Avg} takes the average of the results on all the nine small datasets.}} 
	\vspace{0.25cm}
	\resizebox{\textwidth}{!}{
	\begin{tabular}{clcc|ccccccccc|c}
		\toprule
		 && Epochs & ImageNet & Aircraft & Caltech101 & Cars & CIFAR10 & CIFAR100 & DTD & Flowers & Food & Pets & \textbf{Avg}\\
		\midrule
		\multirow{14}*{\rotatebox{90}{Linear}}&MoCo & 800 & 70.68 & 41.79 & 87.92 & 39.31 & 92.28 & 74.90 & 73.88 & 90.07 & 68.95 & 83.30 & 72.49 \\
		\cmidrule(lr){2-14}
		&MoCo & 800+50 & 70.64  & 41.00 & 87.63 & 39.01 & 92.27 & 75.14 & 74.31 & 88.31 & 68.57 & 83.69 & 72.21\\
		&MoCo + ArCL(views=2) & 800+50 & 69.70  & 44.29 & 89.79 & 42.15 & 93.07 & 76.70 & 74.20 & 90.40 & 70.94 & 83.68 & 73.91 \\
		&MoCo + ArCL(views=3) & 800+50 & 69.80  & 44.57 & 89.48 & 42.11 & \textbf{93.29} & \textbf{77.33} & 74.63 & 91.13 & 71.16 &  \textbf{84.23} & 74.21\\
		&MoCo + ArCL(views=4) & 800+50 & 69.80 & \textbf{44.62} & \textbf{89.66} & \textbf{42.88} & 93.22 & 76.83 & \textbf{75.00} & \textbf{91.59} & \textbf{71.35} & 83.99 & \textbf{74.35}\\
		\cmidrule(lr){2-14}
		&MoCo & 800+100 & 70.64 &  41.65 & 87.64 & 39.31 & 92.12 & 75.03 & 73.94 & 89.53 & 68.31 & 83.55 & 72.34  \\
		&MoCo + ArCL(views=2) & 800+100 & 70.26 & 41.86 & 89.52 & 40.21 & 92.64 & 75.73 & \textbf{74.04} & 88.97 & 70.06 & 84.31 & 73.04 \\
		&MoCo + ArCL(views=3) & 800+100 & 70.92 & 43.87 & 89.36 & \textbf{42.37} & 93.30 & 76.93 & 72.93 & 90.50 & \textbf{71.14} & 84.03 & 73.83\\
		&MoCo + ArCL(views=4) & 800+100 & 69.42 & \textbf{45.55} & \textbf{89.61} & 41.91 & \textbf{93.55} & \textbf{77.05} & \textbf{74.04} & \textbf{91.17} & 70.93 & \textbf{85.48} & \textbf{74.37}\\
		\cmidrule(lr){2-14}
		&MoCo & 200 & 67.72 & 40.02 & 86.59 & 37.41 & 90.90 & 72.43 & 73.88 & 87.97 & 66.97 & 80.00 & 70.69\\
		\cmidrule(lr){2-14}
		&MoCo + ArCL(views=2) & 200 & 68.04 &42.34 & 87.92 & 36.45 & 92.29 & 71.71 & 74.68 & 89.00 & 67.87 & 81.42 & 71.85 \\
		&MoCo + ArCL(views=3) & 200 & 68.74 & \textbf{43.21} & 88.26 & 38.27 & 92.49 & 75.02 & 74.68 & \textbf{89.61} & 68.31 & \textbf{81.64} & 72.39\\
		&MoCo + ArCL(views=4) & 200 & 68.92 & 41.65 & \textbf{88.42} & \textbf{38.77} & \textbf{92.70} & \textbf{75.73} & \textbf{75.43} & 88.95 & \textbf{68.63} & 81.60 & \textbf{72.43}\\
		\cmidrule(lr){2-14}
		&\textbf{Supervised*} & & 77.20 & 43.59 & 90.18 & 44.92 & 91.42 & 73.90 & 72.23 & 89.93 & 69.49 & 91.45 & 74.12\\
		\bottomrule
		\toprule
		 \multirow{14}*{\rotatebox{90}{Finetune}}&MoCo & 800 && 83.56 & 82.54 & 85.09 & 95.89 & 71.81 & 69.95 & 95.26 & 76.81 & 88.83 & 83.30 \\
		\cmidrule(lr){2-14}
		&MoCo & 800+50 && 83.15 & 84.50 & 85.90 & 96.13 & 72.58 & 70.16 & 94.44 & 79.34 & 86.12 & 83.59\\
		&MoCo + ArCL(views=2) & 800+50 && \textbf{86.05} & 87.38 & \textbf{87.28} & 96.33 & 79.39 & 72.18 & 95.89 & 81.36 & 89.03 & 86.10\\
		&MoCo + ArCL(views=3) & 800+50 && 84.03 & 87.64 & 86.34 & \textbf{96.88} & 80.98 & 72.87 & \textbf{96.14} & \textbf{81.90} & 89.20 & 86.22\\
		&MoCo + ArCL(views=4) & 800+50 & &84.19 & \textbf{88.42} & 86.67 & 96.68 & \textbf{81.17} & \textbf{73.09} & 95.90 & 81.70 & \textbf{89.52} & \textbf{86.37}\\
		\cmidrule(lr){2-14}
		&MoCo & 800+100 & &83.18 & 84.50 &84.27 & 96.01 & 72.14 & 70.27 & 95.53 & 78.23 & 88.73 & 83.65 \\
		&MoCo + ArCL(views=2) & 800+100 & &84.45 & 86.84 & 87.20 & 96.40 & 78.40 & 71.91 & 95.93 & 80.54 & 88.56 & 85.58\\
		&MoCo + ArCL(views=3) & 800+100 & &\textbf{85.94} & 86.85 & \textbf{87.34} & 96.36 & 79.75 & 71.44 & 96.00 & 81.48 & 88.26 & 85.94 \\
		&MoCo + ArCL(views=4) & 800+100 & &85.65 & \textbf{88.50} & 86.39 & \textbf{96.91} & \textbf{81.29} & \textbf{73.35} & \textbf{96.17} & \textbf{81.82} & \textbf{89.30} & \textbf{86.60}\\
		\cmidrule(lr){2-14}
		&MoCo & 200 & &83.18 & 82.66 & 84.47 & 95.51 & 72.54 & 70.43 & 94.99 & 77.39 & 86.12 & 83.03\\
		\cmidrule(lr){2-14}
		&MoCo + ArCL(views=2) & 200 & &81.09 & 83.93 & \textbf{86.54} & 95.88 & 76.18 & \textbf{70.69} & 94.44 & 76.78 & 86.98 & 83.61\\
		&MoCo + ArCL(views=3) & 200 & &84.79 & 85.61 & 85.39 & \textbf{96.56} & \textbf{78.81} & 70.59 & \textbf{95.84} & \textbf{80.71} & 87.91 & 85.13\\
		&MoCo + ArCL(views=4) & 200 & &\textbf{84.88} & \textbf{86.19} & 85.90 & 96.35 & 78.62 & \textbf{70.69} & 95.77 & 80.46 & \textbf{88.00} & \textbf{85.21} \\
		\cmidrule(lr){2-14}
		&\textbf{Supervised*} & & &83.50 & 91.01 & 82.61 & 96.39 & 82.91 & 73.30 & 95.50 & 84.60 & 92.42 & 86.92 \\
		\bottomrule    
	\end{tabular}
	}
\end{table}

\textbf{Setup.} In this part, we will first train an encoder on ImageNet \citep{imagenet} using MoCo v2 modified with our proposed ArCL. We select the number of ArCL views from $\{2,3,4\}.$ Notice that since there exists asymmetric network architecture in MoCo, there is difference between MoCo and MoCo + ArCL(views=2). We use ResNet50 as the backbone and follow all the other architecture settings of MoCo. The memory bank size $K$ is set to 65536 and the batch size is set to 256. For the training epochs, we propose three schemes: 1) 200 epochs training from scratch, 2) 50 epochs training from the model which is pretrained by MoCo for 800 epochs, and 3) 100 epochs training from the model which is pretrained by MoCo for 800 epochs. For each setting, we search the best start learning rate in $\{0.03, 0.0075, 0.0015\}$ and the best temperature in $\{0.20, 0.15, 0.12, 0.10, 0.05\}$. We use the SGD optimizer and apply the cosine learning rate decay.

After the model has been pre-trained by contrastive learning, we test its \revise{transferability} with linear evaluation and full fine-tuning on various downstream small datasets, following the settings in \citep{Ericsson_2021_CVPR}. 
For the target datasets, we adopt FGVC Aircraft \citep{aircraft}, Caltech-101 \citep{caltech101}, Stanford Cars \citep{cars}, CIFAR10, CIFAR100, DTD \citep{dtd}, Oxford 102 Flowers \citep{flowers}, Food-101 \citep{food} and Oxford-IIIT Pets \citep{pets}. 
For linear evaluation, multinomial logistic regression is fit on the extracted features. 
For full fine-tuning, the full models are trained for 5,000 steps using SGD with Nesterov momentum. 
We calculate the average results of the nine downstream datasets.

\textbf{Results.} 
We compare our methods with the MoCo baseline. 
There are about an average rise of 2\% in the linear evaluation setting and about an average rise of 3\% in the finetuning setting. 
In particular, we gain a huge improvement on CIFAR100 finetuning, where our proposed methods can outperform the baseline for up to 10\%. 
It is also worthy to mention that our methods can reach the supervised results on nearly all the datasets. 
These facts can all indicate the effectiveness of our proposed methods.

Besides, the results also show that, under the same epoch setting, the average performance grows as the number of the views increase, which fits our theoretical analysis. 
One can also notice that there is only a small improvement in the $800+100$ epoch setting compared to the $800+50$ epoch setting. 
This implies that our methods can also converge in a rather fast speed, which can also be effective within a few epochs of training. 
Compared to CL methods, ArCL gets to see more augmentations in total since for each original sample, we construct its $m$ augmented versions.

\subsection{Comparison with Average Alignment Loss}
\label{aal}
Compared to CL methods, ArCL gets to see more augmentations in total since for each original sample, we construct its $m$ augmented versions.  
To make the comparison between ArCL and CL fairer, for original CL, we also construct $m$ views while use the average of the similarity between all positive pairs among the $m$ views, namely \textbf{average alignment loss} (AAL), as the learning objective. 
Results for SimCLR on CIFAR10 and CIFAR100 and MoCo on ImageNet can be seen in Table \ref{table:cifar-aal} and \ref{table:imagenet-aal} in the appendix. Detailed experimental settings are also deferred there. As the experiments show, contrastive learning with AAL has similar performance with vanilla methods, which is still much worse than ArCL on distribution shift downstream tasks.

\clearpage

\bibliography{ref_simplified}

\begin{thebibliography}{54}
\providecommand{\natexlab}[1]{#1}
\providecommand{\url}[1]{\texttt{#1}}
\expandafter\ifx\csname urlstyle\endcsname\relax
  \providecommand{\doi}[1]{doi: #1}\else
  \providecommand{\doi}{doi: \begingroup \urlstyle{rm}\Url}\fi

\bibitem[Arjovsky et~al.(2019)Arjovsky, Bottou, Gulrajani, and
  Lopez-Paz]{arjovsky2019invariant}
Martin Arjovsky, L{\'e}on Bottou, Ishaan Gulrajani, and David Lopez-Paz.
\newblock Invariant risk minimization.
\newblock \emph{arXiv preprint arXiv:1907.02893}, 2019.

\bibitem[Arora et~al.(2019)Arora, Khandeparkar, Khodak, Plevrakis, and
  Saunshi]{arora2019theoretical}
Sanjeev Arora, Hrishikesh Khandeparkar, Mikhail Khodak, Orestis Plevrakis, and
  Nikunj Saunshi.
\newblock A theoretical analysis of contrastive unsupervised representation
  learning.
\newblock \emph{arXiv preprint arXiv:1902.09229}, 2019.

\bibitem[Ash et~al.(2021)Ash, Goel, Krishnamurthy, and
  Misra]{ash2021investigating}
Jordan~T Ash, Surbhi Goel, Akshay Krishnamurthy, and Dipendra Misra.
\newblock Investigating the role of negatives in contrastive representation
  learning.
\newblock \emph{arXiv preprint arXiv:2106.09943}, 2021.

\bibitem[Bao et~al.(2022)Bao, Nagano, and Nozawa]{bao2022surrogate}
Han Bao, Yoshihiro Nagano, and Kento Nozawa.
\newblock On the surrogate gap between contrastive and supervised losses.
\newblock In \emph{International Conference on Machine Learning}, pp.\
  1585--1606. PMLR, 2022.

\bibitem[Bengio et~al.(2013)Bengio, Courville, and
  Vincent]{bengio2013representation}
Yoshua Bengio, Aaron Courville, and Pascal Vincent.
\newblock Representation learning: A review and new perspectives.
\newblock \emph{IEEE transactions on pattern analysis and machine
  intelligence}, 2013.

\bibitem[Blanchard et~al.(2011)Blanchard, Lee, and
  Scott]{blanchard2011generalizing}
Gilles Blanchard, Gyemin Lee, and Clayton Scott.
\newblock Generalizing from several related classification tasks to a new
  unlabeled sample.
\newblock \emph{Proc. of NeurIPS}, 2011.

\bibitem[Bossard et~al.(2014)Bossard, Guillaumin, and Gool]{food}
Lukas Bossard, Matthieu Guillaumin, and Luc~Van Gool.
\newblock Food-101 - mining discriminative components with random forests.
\newblock In \emph{ECCV}, 2014.

\bibitem[Chen et~al.(2020)Chen, Kornblith, Norouzi, and Hinton]{chen2020simple}
Ting Chen, Simon Kornblith, Mohammad Norouzi, and Geoffrey Hinton.
\newblock A simple framework for contrastive learning of visual
  representations.
\newblock In \emph{Proc. of ICML}, 2020.

\bibitem[Chen \& He(2021)Chen and He]{chen2021exploring}
Xinlei Chen and Kaiming He.
\newblock Exploring simple siamese representation learning.
\newblock In \emph{Proc. of CVPR}, 2021.

\bibitem[Cimpoi et~al.(2014)Cimpoi, Maji, Kokkinos, Mohamed, and Vedaldi]{dtd}
Mircea Cimpoi, Subhransu Maji, Iasonas Kokkinos, Sammy Mohamed, and Andrea
  Vedaldi.
\newblock Describing textures in the wild.
\newblock In \emph{Proceedings of the IEEE Conference on Computer Vision and
  Pattern Recognition (CVPR)}, June 2014.

\bibitem[Deng et~al.(2009)Deng, Dong, Socher, Li, Li, and Fei-Fei]{imagenet}
Jia Deng, Wei Dong, Richard Socher, Li-Jia Li, Kai Li, and Li~Fei-Fei.
\newblock Imagenet: A large-scale hierarchical image database.
\newblock In \emph{2009 IEEE Conference on Computer Vision and Pattern
  Recognition}, pp.\  248--255, 2009.
\newblock \doi{10.1109/CVPR.2009.5206848}.

\bibitem[Duchi \& Namkoong(2021)Duchi and Namkoong]{duchi2021learning}
John~C Duchi and Hongseok Namkoong.
\newblock Learning models with uniform performance via distributionally robust
  optimization.
\newblock \emph{The Annals of Statistics}, 2021.

\bibitem[Duchi et~al.(2021)Duchi, Glynn, and Namkoong]{duchi2021statistics}
John~C Duchi, Peter~W Glynn, and Hongseok Namkoong.
\newblock Statistics of robust optimization: A generalized empirical likelihood
  approach.
\newblock \emph{Mathematics of Operations Research}, 2021.

\bibitem[Ericsson et~al.(2021)Ericsson, Gouk, and
  Hospedales]{Ericsson_2021_CVPR}
Linus Ericsson, Henry Gouk, and Timothy~M. Hospedales.
\newblock How well do self-supervised models transfer?
\newblock In \emph{Proceedings of the IEEE/CVF Conference on Computer Vision
  and Pattern Recognition (CVPR)}, pp.\  5414--5423, June 2021.

\bibitem[Fei-Fei et~al.(2004)Fei-Fei, Fergus, and Perona]{caltech101}
Li~Fei-Fei, R.~Fergus, and P.~Perona.
\newblock Learning generative visual models from few training examples: An
  incremental bayesian approach tested on 101 object categories.
\newblock In \emph{2004 Conference on Computer Vision and Pattern Recognition
  Workshop}, pp.\  178--178, 2004.
\newblock \doi{10.1109/CVPR.2004.383}.

\bibitem[Goyal et~al.(2021)Goyal, Caron, Lefaudeux, Xu, Wang, Pai, Singh,
  Liptchinsky, Misra, Joulin, et~al.]{goyal2021self}
Priya Goyal, Mathilde Caron, Benjamin Lefaudeux, Min Xu, Pengchao Wang, Vivek
  Pai, Mannat Singh, Vitaliy Liptchinsky, Ishan Misra, Armand Joulin, et~al.
\newblock Self-supervised pretraining of visual features in the wild.
\newblock \emph{arXiv preprint arXiv:2103.01988}, 2021.

\bibitem[Grill et~al.(2020)Grill, Strub, Altch{\'e}, Tallec, Richemond,
  Buchatskaya, Doersch, Avila~Pires, Guo, Gheshlaghi~Azar,
  et~al.]{grill2020bootstrap}
Jean-Bastien Grill, Florian Strub, Florent Altch{\'e}, Corentin Tallec, Pierre
  Richemond, Elena Buchatskaya, Carl Doersch, Bernardo Avila~Pires, Zhaohan
  Guo, Mohammad Gheshlaghi~Azar, et~al.
\newblock Bootstrap your own latent-a new approach to self-supervised learning.
\newblock \emph{Proc. of NeurIPS}, 2020.

\bibitem[HaoChen et~al.(2022)HaoChen, Wei, Kumar, and Ma]{haochen2022beyond}
Jeff~Z HaoChen, Colin Wei, Ananya Kumar, and Tengyu Ma.
\newblock Beyond separability: Analyzing the linear transferability of
  contrastive representations to related subpopulations.
\newblock \emph{arXiv preprint arXiv:2204.02683}, 2022.

\bibitem[He et~al.(2020)He, Fan, Wu, Xie, and Girshick]{he2020momentum}
Kaiming He, Haoqi Fan, Yuxin Wu, Saining Xie, and Ross Girshick.
\newblock Momentum contrast for unsupervised visual representation learning.
\newblock In \emph{Proc. of CVPR}, 2020.

\bibitem[Hjelm et~al.(2018)Hjelm, Fedorov, Lavoie-Marchildon, Grewal, Bachman,
  Trischler, and Bengio]{hjelm2018learning}
R~Devon Hjelm, Alex Fedorov, Samuel Lavoie-Marchildon, Karan Grewal, Phil
  Bachman, Adam Trischler, and Yoshua Bengio.
\newblock Learning deep representations by mutual information estimation and
  maximization.
\newblock \emph{arXiv preprint arXiv:1808.06670}, 2018.

\bibitem[Hu et~al.(2022)Hu, Liu, Zhou, Wang, and Huang]{hu2022your}
Tianyang Hu, Zhili Liu, Fengwei Zhou, Wenjia Wang, and Weiran Huang.
\newblock Your contrastive learning is secretly doing stochastic neighbor
  embedding.
\newblock \emph{arXiv preprint arXiv:2205.14814}, 2022.

\bibitem[Huang et~al.(2021)Huang, Yi, and Zhao]{huang2022generalization}
Weiran Huang, Mingyang Yi, and Xuyang Zhao.
\newblock Towards the generalization of contrastive self-supervised learning.
\newblock \emph{arXiv preprint arXiv:2111.00743}, 2021.

\bibitem[Jahanian et~al.(2021)Jahanian, Puig, Tian, and
  Isola]{jahanian2021generative}
Ali Jahanian, Xavier Puig, Yonglong Tian, and Phillip Isola.
\newblock Generative models as a data source for multiview representation
  learning.
\newblock \emph{arXiv preprint arXiv:2106.05258}, 2021.

\bibitem[Jiang et~al.(2020)Jiang, Chen, Chen, and Wang]{jiang2020robust}
Ziyu Jiang, Tianlong Chen, Ting Chen, and Zhangyang Wang.
\newblock Robust pre-training by adversarial contrastive learning.
\newblock \emph{Proc. of NeurIPS}, 2020.

\bibitem[Jin et~al.(2020)Jin, Barzilay, and Jaakkola]{jin2020domain}
Wengong Jin, Regina Barzilay, and Tommi Jaakkola.
\newblock Domain extrapolation via regret minimization.
\newblock \emph{arXiv preprint arXiv:2006.03908}, 2020.

\bibitem[Kim \& Mnih(2018)Kim and Mnih]{kim2018disentangling}
Hyunjik Kim and Andriy Mnih.
\newblock Disentangling by factorising.
\newblock In \emph{Proc. of ICML}, 2018.

\bibitem[Kim et~al.(2020)Kim, Tack, and Hwang]{kim2020adversarial}
Minseon Kim, Jihoon Tack, and Sung~Ju Hwang.
\newblock Adversarial self-supervised contrastive learning.
\newblock \emph{Proc. of NeurIPS}, 2020.

\bibitem[Krause et~al.(2013)Krause, Deng, Stark, and Fei-Fei]{cars}
Jonathan Krause, Jia Deng, Michael Stark, and Li~Fei-Fei.
\newblock Collecting a large-scale dataset of fine-grained cars.
\newblock 2013.

\bibitem[Krizhevsky(2009)]{cifar}
Alex Krizhevsky.
\newblock Learning multiple layers of features from tiny images.
\newblock 2009.

\bibitem[Krueger et~al.(2021)Krueger, Caballero, Jacobsen, Zhang, Binas, Zhang,
  Le~Priol, and Courville]{krueger2021out}
David Krueger, Ethan Caballero, Joern-Henrik Jacobsen, Amy Zhang, Jonathan
  Binas, Dinghuai Zhang, Remi Le~Priol, and Aaron Courville.
\newblock Out-of-distribution generalization via risk extrapolation (rex).
\newblock In \emph{Proc. of ICML}, 2021.

\bibitem[Kuang et~al.(2020)Kuang, Xiong, Cui, Athey, and Li]{kuang2020stable}
Kun Kuang, Ruoxuan Xiong, Peng Cui, Susan Athey, and Bo~Li.
\newblock Stable prediction with model misspecification and agnostic
  distribution shift.
\newblock In \emph{Proc. of AAAI}, 2020.

\bibitem[Liu et~al.(2020)Liu, Sun, Wang, Tang, Li, Qin, Chen, and
  Liu]{liu2020learning}
Chang Liu, Xinwei Sun, Jindong Wang, Haoyue Tang, Tao Li, Tao Qin, Wei Chen,
  and Tie-Yan Liu.
\newblock Learning causal semantic representation for out-of-distribution
  prediction.
\newblock \emph{arXiv preprint arXiv:2011.01681}, 2020.

\bibitem[Liu et~al.(2021)Liu, HaoChen, Gaidon, and Ma]{liu2021self}
Hong Liu, Jeff~Z HaoChen, Adrien Gaidon, and Tengyu Ma.
\newblock Self-supervised learning is more robust to dataset imbalance.
\newblock \emph{arXiv preprint arXiv:2110.05025}, 2021.

\bibitem[Mahajan et~al.(2021)Mahajan, Tople, and Sharma]{mahajan2021domain}
Divyat Mahajan, Shruti Tople, and Amit Sharma.
\newblock Domain generalization using causal matching.
\newblock In \emph{Proc. of ICML}, 2021.

\bibitem[Maji et~al.(2013)Maji, Rahtu, Kannala, Blaschko, and
  Vedaldi]{aircraft}
Subhransu Maji, Esa Rahtu, Juho Kannala, Matthew Blaschko, and Andrea Vedaldi.
\newblock Fine-grained visual classification of aircraft, 2013.
\newblock URL \url{https://arxiv.org/abs/1306.5151}.

\bibitem[Muandet et~al.(2013)Muandet, Balduzzi, and
  Sch{\"o}lkopf]{muandet2013domain}
Krikamol Muandet, David Balduzzi, and Bernhard Sch{\"o}lkopf.
\newblock Domain generalization via invariant feature representation.
\newblock In \emph{Proc. of ICML}, 2013.

\bibitem[Nilsback \& Zisserman(2008)Nilsback and Zisserman]{flowers}
Maria-Elena Nilsback and Andrew Zisserman.
\newblock Automated flower classification over a large number of classes.
\newblock In \emph{2008 Sixth Indian Conference on Computer Vision, Graphics \&
  Image Processing}, pp.\  722--729, 2008.
\newblock \doi{10.1109/ICVGIP.2008.47}.

\bibitem[Parkhi et~al.(2012)Parkhi, Vedaldi, Zisserman, and Jawahar]{pets}
Omkar~M Parkhi, Andrea Vedaldi, Andrew Zisserman, and C.~V. Jawahar.
\newblock Cats and dogs.
\newblock In \emph{2012 IEEE Conference on Computer Vision and Pattern
  Recognition}, pp.\  3498--3505, 2012.
\newblock \doi{10.1109/CVPR.2012.6248092}.

\bibitem[Peters et~al.(2016)Peters, B{\"u}hlmann, and
  Meinshausen]{peters2016causal}
Jonas Peters, Peter B{\"u}hlmann, and Nicolai Meinshausen.
\newblock Causal inference by using invariant prediction: identification and
  confidence intervals.
\newblock \emph{Journal of the Royal Statistical Society. Series B (Statistical
  Methodology)}, 2016.

\bibitem[Rojas-Carulla et~al.(2018)Rojas-Carulla, Sch{\"o}lkopf, Turner, and
  Peters]{rojas2018invariant}
Mateo Rojas-Carulla, Bernhard Sch{\"o}lkopf, Richard Turner, and Jonas Peters.
\newblock Invariant models for causal transfer learning.
\newblock \emph{The Journal of Machine Learning Research}, 2018.

\bibitem[Sagawa et~al.(2019)Sagawa, Koh, Hashimoto, and
  Liang]{sagawa2019distributionally}
Shiori Sagawa, Pang~Wei Koh, Tatsunori~B Hashimoto, and Percy Liang.
\newblock Distributionally robust neural networks.
\newblock In \emph{Proc. of ICLR}, 2019.

\bibitem[Shah et~al.()Shah, Tamuly, Raghunathan, Jain, and Netrapalli]{pitfall}
Harshay Shah, Kaustav Tamuly, Aditi Raghunathan, Prateek Jain, and Praneeth
  Netrapalli.
\newblock The pitfalls of simplicity bias in neural networks.
\newblock In \emph{NeurIPS}.

\bibitem[Shen et~al.(2022)Shen, Jones, Kumar, Xie, HaoChen, Ma, and
  Liang]{shen2022connect}
Kendrick Shen, Robbie Jones, Ananya Kumar, Sang~Michael Xie, Jeff~Z HaoChen,
  Tengyu Ma, and Percy Liang.
\newblock Connect, not collapse: Explaining contrastive learning for
  unsupervised domain adaptation.
\newblock \emph{arXiv preprint arXiv:2204.00570}, 2022.

\bibitem[Shen et~al.(2020)Shen, Cui, Zhang, and Kuang]{shen2020stable}
Zheyan Shen, Peng Cui, Tong Zhang, and Kun Kuang.
\newblock Stable learning via sample reweighting.
\newblock In \emph{Proc. of AAAI}, 2020.

\bibitem[Shen et~al.(2021)Shen, Liu, He, Zhang, Xu, Yu, and
  Cui]{shen2021towards}
Zheyan Shen, Jiashuo Liu, Yue He, Xingxuan Zhang, Renzhe Xu, Han Yu, and Peng
  Cui.
\newblock Towards out-of-distribution generalization: A survey.
\newblock \emph{arXiv preprint arXiv:2108.13624}, 2021.

\bibitem[Shi et~al.(2022)Shi, Daunhawer, Vogt, Torr, and Sanyal]{shi2022robust}
Yuge Shi, Imant Daunhawer, Julia~E Vogt, Philip~HS Torr, and Amartya Sanyal.
\newblock How robust are pre-trained models to distribution shift?
\newblock \emph{arXiv preprint arXiv:2206.08871}, 2022.

\bibitem[Tian et~al.(2020)Tian, Krishnan, and Isola]{tian2020contrastive}
Yonglong Tian, Dilip Krishnan, and Phillip Isola.
\newblock Contrastive multiview coding.
\newblock In \emph{Proc. of ECCV}, 2020.

\bibitem[Tr{\"a}uble et~al.(2021)Tr{\"a}uble, Creager, Kilbertus, Locatello,
  Dittadi, Goyal, Sch{\"o}lkopf, and Bauer]{trauble2021disentangled}
Frederik Tr{\"a}uble, Elliot Creager, Niki Kilbertus, Francesco Locatello,
  Andrea Dittadi, Anirudh Goyal, Bernhard Sch{\"o}lkopf, and Stefan Bauer.
\newblock On disentangled representations learned from correlated data.
\newblock In \emph{Proc. of ICML}, 2021.

\bibitem[Tschannen et~al.(2019)Tschannen, Djolonga, Rubenstein, Gelly, and
  Lucic]{tschannen2019mutual}
Michael Tschannen, Josip Djolonga, Paul~K Rubenstein, Sylvain Gelly, and Mario
  Lucic.
\newblock On mutual information maximization for representation learning.
\newblock \emph{arXiv preprint arXiv:1907.13625}, 2019.

\bibitem[von K{\"u}gelgen et~al.(2021)von K{\"u}gelgen, Sharma, Gresele,
  Brendel, Sch{\"o}lkopf, Besserve, and Locatello]{von2021self}
Julius von K{\"u}gelgen, Yash Sharma, Luigi Gresele, Wieland Brendel, Bernhard
  Sch{\"o}lkopf, Michel Besserve, and Francesco Locatello.
\newblock Self-supervised learning with data augmentations provably isolates
  content from style.
\newblock \emph{arXiv preprint arXiv:2106.04619}, 2021.

\bibitem[Wang et~al.(2021{\natexlab{a}})Wang, Lan, Liu, Ouyang, Zeng, and
  Qin]{wang2021generalizing}
Jindong Wang, Cuiling Lan, Chang Liu, Yidong Ouyang, Wenjun Zeng, and Tao Qin.
\newblock Generalizing to unseen domains: A survey on domain generalization.
\newblock \emph{arXiv preprint arXiv:2103.03097}, 2021{\natexlab{a}}.

\bibitem[Wang et~al.(2021{\natexlab{b}})Wang, Yue, Huang, Sun, and
  Zhang]{wang2021self}
Tan Wang, Zhongqi Yue, Jianqiang Huang, Qianru Sun, and Hanwang Zhang.
\newblock Self-supervised learning disentangled group representation as
  feature.
\newblock \emph{Proc. of NeurIPS}, 2021{\natexlab{b}}.

\bibitem[Ye et~al.(2021)Ye, Xie, Cai, Li, Li, and Wang]{ye2021towards}
Haotian Ye, Chuanlong Xie, Tianle Cai, Ruichen Li, Zhenguo Li, and Liwei Wang.
\newblock Towards a theoretical framework of out-of-distribution
  generalization.
\newblock \emph{arXiv preprint arXiv:2106.04496}, 2021.

\bibitem[Zbontar et~al.(2021)Zbontar, Jing, Misra, LeCun, and
  Deny]{zbontar2021barlow}
Jure Zbontar, Li~Jing, Ishan Misra, Yann LeCun, and St{\'e}phane Deny.
\newblock Barlow twins: Self-supervised learning via redundancy reduction.
\newblock In \emph{Proc. of ICML}, 2021.

\end{thebibliography}
\bibliographystyle{iclr2023_conference}

\clearpage
\appendix
\begin{center}
    \Large Appendix
\end{center}
\section{MoCo + ArCL}
\IncMargin{1em}
\begin{algorithm} \SetKwData{Left}{left}\SetKwData{This}{this}\SetKwData{Up}{up} \SetKwFunction{Union}{Union}\SetKwFunction{FindCompress}{FindCompress} \SetKwInOut{Input}{input}\SetKwInOut{Output}{output}
	
	\Input{Batch size $N$, temperature $\tau$, memory bank $M$, augmentationn distribution $\pi$, number of views $m$, epoch $T$, encoders $f_q,f_k$, projector $g$.}  \BlankLine 
	 \For{$t=1,\dots,T$}{ 
	 	sample minibatch $\{X_i\}_{i=1}^N$\;
	 	\For{$i=1,\dots,N$}{
	 	draw $m$ transformations $\widehat{\mathcal{A}} =\{A_1,\dots,A_m\}$ from distribution $\pi$\;
	 	$z^1_{i,j} = g(f_q(A_j X_i))$ for $j\in[m]$\;
	 	$z^2_{i,j} = g(f_k(A_j X_i))$ for $j\in[m]$\;
	 	\emph{\color{green}\# select the worst positive samples\;}
	 	$s^+_{i} = \min_{j,l\in[m],j\neq l}\{ z_{i,j}^{1\top} z^{2}_{i,l}/(\|z^{1}_{i,j}\|\|z^{2}_{i,l}\|)\}$\;
	    
	 	\emph{\color{green}\# select the negative samples\;}
	 	\For{$m\in M$}{
	 	$s^-_{i,m} = z_{i,1}^{\top} m /(\|z_{i,1}\|\|m\|)$\;
	 	}}
	 	compute $L=-\frac{1}{N}\sum_{i=1}^N \log \frac{\exp(s_{i}^+/\tau)}{\sum_{m\in M} \exp(s_{i,m}^-/\tau)}$\;
	    update $f$ and $g$ to minimize $L$\;
	    update $M$;
	 	}
	 	\Return{$f$}
 	 	  \caption{MoCo + ArCL}
 	 	  \label{alg2} 
 	 \end{algorithm}
 \DecMargin{1em} 

\section{Proofs in Section 3}
\begin{proof}[Proof of Lemma \ref{lem:iid}]
For simplicity we omit the notations $\mathcal{D}$ and $\vartheta$ in this proof, and use $x_1$ to denote the positive sample, i.e. the augmented data, use $p_k$ to denote $\mathcal{D}_{\vartheta}(C_k)$ and use $\mu_k$ to denote $\mu_k(f;\mathcal{D}_{\vartheta})$. From Theorem 2 and Lemma B.1 in \citep{huang2022generalization}, we have
\begin{equation}\label{eq:pre}
\E_{x\in C_{k}}\left\|f\left({x}_{1}\right)-\mu_{k}\right\| \leq c\sqrt{\frac{1}{p_{k}}}L_{pos}^{\frac{1}{4}}(f)+\tau(\sigma, \delta)
\end{equation}
for some constant $c$, where
$$\tau(\sigma,\delta):=4\left(1 - \sigma\left(1 - \frac{L\delta}{4}\right)\right).$$
Note that $\tau$ is decreasing with $\sigma$ and increasing with $\delta$.
Then we have 
\begin{equation}
    \begin{aligned}
    c \left(\sum_{k=1}^K \sqrt{p_k}\right) L_{\mathrm{pos}}^{\frac{1}{4}}(f) + \tau(\sigma, \delta) 
    &\geq \sum_{k=1}^K p_k\E_{x\in C_k}\|f(x_1)-\mu_k\| \\ 
    &\geq \sum_{k=1}^K \frac{p_k}{\|h\|}\E_{x\in C_k}\E_{x_1\in Ax}\|h\circ f(x_1)-h\circ\mu_k\| \\
    &\geq \sum_{k=1}^K \frac{p_k}{\|h\|}\E_{x\in C_k}\E_{x_1\in Ax}\|h\circ f(x_1)-e_k\| - \frac{1}{\|h\|}\sum_{k=1}^K p_k\|e_k -h\circ\mu_k\| \\
    &= \frac{1}{\|h\|}R(h\circ f)-\frac{1}{\|h\|}\sum_{k=1}^K p_k\|e_k -h\circ\mu_k\|
    \end{aligned}
\end{equation}
for all linear layer $h\in\mathbb{R}^{K\times d_1}$. Therefore, we obtain
\begin{equation}
    \begin{aligned}
    R(h\circ f) 
    &\leq c\|h\| L_{\mathrm{pos}}^{\frac{1}{4}}\sum_{k=1}^K \sqrt{p_k} + \|h\|\tau(\sigma, \delta) + \sum_{k=1}^K p_k\|e_k -h\circ\mu_k\| \\
    &\leq c\|h\|\sqrt{K}  L_{\mathrm{pos}}^{\frac{1}{4}} + \|h\|\tau(\sigma, \delta) + \sum_{k=1}^K p_k\|e_k -h\circ\mu_k\|.
    \end{aligned}
\end{equation}
\end{proof}

\begin{proof}[Proof of Theorem \ref{thm:iid}]
By the triangle inequality and \eqref{eq:pre}, we have
\begin{equation*}
    \|\mu_k\| \geq 1 - \E_{x\in C_k} \|f(x_1)-\mu_k\| \geq 1 - C\sqrt{\frac{1}{p_k}}L_{pos}^{\frac{1}{4}}(f) - \tau(\sigma,\delta).
\end{equation*}
Therefore, 
$$
\min_k \|\mu_k\|_2 \geq 1-C\sqrt{\frac{1}{p_0}}L_{pos}^{\frac{1}{4}}(f) - \tau(\sigma,\delta).
$$
Define $U=(\mu_1,\dots,\mu_K)\in\mathbb{R}^{d_1\times K}$ and let $h_0=U^{+}$ be the Moore–Penrose inverse of $U$ such that $h_0 U = I_K$. Then we have
$$
\|h_0\| = \sup_{x\in\mathbb{R}^{d_1}} \frac{\|x\|}{\|Ux\|}.
$$
For every $x=(x_1,\dots,x_{d_1})\in\mathbb{R}^{d_1}$ with $\|x\|_2 =1,$ we have 
\begin{align*}
    \|U x\|_2^2 &= \sum_{i=1}^{d_1} x_i^2 \|\mu_i\|^2 + \sum_{i\not=j} x_i x_j \mu_i^\top \mu_j \\
    &\geq \min_i \|\mu_i\|_2^2 - \max_{i,j}|\mu_i^\top \mu_j|\sum_{i\not=j} |x_i x_j| \\ 
    &= \min_i \|\mu_i\|_2^2 - \max_{i,j}|\mu_i^\top \mu_j| \left(\left(\sum_{i} |x_i|\right)^2 - 1\right) \\ 
    &\geq \min_i \|\mu_i\|_2^2 - (K - 1)\max_{i,j}|\mu_i^\top \mu_j| \\ 
    &\geq 1 - c\sqrt{\frac{1}{p_0}}L_{\mathrm{pos}}^{\frac{1}{4}}(f) - \tau(\sigma,\delta)  - K \max_{i,j}|\mu_i^\top \mu_j| \\
    &=: \frac{1}{\eta_{\mathrm{reg}}(f;\mathcal{D}, \vartheta)}.
\end{align*}
Therefore, we have
$$
\|h_0\| \leq \eta_{\mathrm{reg}}(f;\mathcal{D}, \vartheta).
$$
Then, by Lemma \ref{lem:iid}, we complete the proof by
\begin{align*}
    R(f) = \min_{h}R(h\circ f) \leq R(h_0 \circ f) \leq  c \gamma_{reg}(f;\mathcal{D},\vartheta)L_{\mathrm{pos}}^{\frac{1}{4}}(f;\mathcal{D},\vartheta) + \gamma_{reg}(f;\mathcal{D},\vartheta)\tau(\sigma,\delta).
\end{align*}
\end{proof}

\section{Proof in Section 4}
\begin{proof}[Proof of Proposition \ref{prop:counter}]
For any $\varepsilon>0$, let
$f(x_1,x_2)=x_1 + \frac{\sqrt{\varepsilon}}{2} x_2$. Then, the alignment loss of $f$ satisfies
\begin{align*}
    \gL_{\mathrm{align}}(f;\gD,\pi) 
    &= \sE_{X\sim \gD}\sE_{(A_1,A_2)\sim \pi^2}\|f(A_1(X))-f(A_2(X))\|^2\\
    &=\sE_{(X_1,X_2)\sim \gN(0,I_2)}\sE_{(\theta_1,\theta_2)\sim\gN(0,I_2)}\left|(X_1+\frac{\sqrt{\varepsilon}}{2}\theta_1X_2)-(X_1+\frac{\sqrt{\varepsilon}}{2}\theta_2X_2)\right|^2\\
    &= \frac{\varepsilon}{4} \sE_{X_2\sim\gN(0,1)} X_2^2 \sE_{(\theta_1,\theta_2)\sim\gN(0,I_2)}(\theta_1-\theta_2)^2 = 2\left(\frac{\sqrt{\varepsilon}}{2}\right)^2 < \varepsilon.
\end{align*}

Let $c=0$ and $c'=2/\sqrt{\varepsilon}$. Then we have two domains
$\gD_c=\{(X_1,0)\colon X_1\sim \gN(0,1)\}$ and
$\gD_{c'}=\{(X_1,2X_2/\sqrt{\varepsilon})\colon X_1\sim \gN(0,1),X_2\sim \gN(0,1)\}$.

Therefore, we can get
$$
\gR(f;\gD_c) =0,
$$
but
\begin{align*}
    \gR(f;\gD_{c'}) 
    &= P(Y=0, hf(X)\ge 0)+ P(Y=1, hf(X)< 0) \tag{suppose $h\in \sR^+$ without loss of generality}\\
    &= P(X_1<0, f(X)\ge 0)+P(X_1\ge 0, f(X)< 0)\\
    &= P(X_1<0, X_1 + X_2 \geq 0) + P(X_1\geq 0, X_1 + X_2 < 0)\\
    &=\frac{1}{8}+\frac{1}{8}=\frac{1}{4}.
\end{align*}
This finishes the proof.
\end{proof}

\section{Proofs in Section 5}
\begin{proof}[Proof of Theorem \ref{prop:IRM}]
For any $f$ and $h$ with $\|h\|\leq c_h$, we have
\begin{align*}
    R (h\circ f; \mathcal{D}_{A}) - R (h\circ f; \mathcal{D}_{A'}) &= \E_{(X,Y)\sim\mathcal{D}}\left( |h\circ f(AX)-Y|^2- |h\circ f(A'X )-Y|^2\right) \\
    &= \E_{(X,Y)\sim\mathcal{D}} (h\circ f(Ax)-h\circ f(A' x))((h\circ f(Ax)+h\circ f(A' x))+2y) \\
    &\leq c \E_{(X,Y)\sim\mathcal{D}} \|h\circ f(Ax)-h\circ f(A' x)\| \\
    &\leq c \|h\| \E_{(X,Y)\sim\mathcal{D}} \|f(Ax)-f(A' x)\| \\
    &\leq c\|h\| L_{AR}(f)
\end{align*}
for some constant $c$ which depends on $c_h$.
\end{proof}

\begin{proof}[proof of Theorem \ref{thm:finite_sample}]
We first fix the sample $(x_1,\dots,x_n)$, and consider the randomness of $\mathcal{A}_m(x_i)$. Define $A_1^*(x_i)$ and $A_2^*(x_i)$ as 
$$
A_1^*(x_i), A_2^*(x_i) = \arg\sup_{A_1, A_2\in\mathcal{A}} \|f(A_1 x_i)-f(A_2 x_i)\|_2^2.
$$
Note that their order doesn't matter. For every $\varepsilon>0,$ we have

\begin{align*}
    &P\left\{\frac{1}{n}\sum_{i=1}^n \sup_{A_1,A_2\in\mathcal{A}}\|f(A_1 x_i)-f(A_2 x_i)\|_2^2 \leq \frac{1}{n}\sum_{i=1}^n\sup_{A_1,A_2\in\mathcal{A}_m (x_i)} \|f(A_1 x_i)-f(A_2 x_i)\|_2^2 + L^2 L_{\mathcal{A}}^2 \left(\frac{\log(2n/\varepsilon)}{c_{\vartheta}m}\right)^{\frac{2}{d}} \right\} \\
    \stackrel{{\text{(\romannumeral1)}}}{\geq} & P\left\{\max_{i\in[n]}\left\{\inf_{A\in\mathcal{A}_m (x_i)}\|A - A_1^*(x_i)\|_2, \inf_{A\in\mathcal{A}_m (x_i)}\|A - A_2^*(x_i)\|_2 \right\} \leq  \left(\frac{\log(2n/\varepsilon)}{c_{\vartheta}m}\right)^{\frac{1}{d}}\right\} \\
    \stackrel{{\text{(\romannumeral2)}}}{\geq} & 1 - \sum_{i=1}^n P\left\{\inf_{A\in\mathcal{A}_m (x_i)}\|A - A_1^*(x_i)\|_2\leq  \left(\frac{\log(2n/\varepsilon)}{c_{\vartheta}m}\right)^{\frac{1}{d}}\right\} - \sum_{i=1}^n P\left\{\inf_{A\in\mathcal{A}_m (x_i)}\|A - A_2^*(x_i)\|_2\leq  \left(\frac{\log(2n/
    \varepsilon)}{c_{\vartheta}m}\right)^{\frac{1}{d}}\right\} \\
    {\geq} & {1 - \sum_{i=1}^n P\left\{\exists A\in\mathcal{A}_m (x_i):\ \|A - A_1^*(x_i)\|_2\leq  \left(\frac{\log(2n/\varepsilon)}{c_{\vartheta}m}\right)^{\frac{1}{d}}\right\}}\\
    &\quad\quad\quad\quad { -\sum_{i=1}^n P\left\{\exists A\in\mathcal{A}_m (x_i):\ \|A - A_2^*(x_i)\|_2\leq  \left(\frac{\log(2n/\varepsilon)}{c_{\vartheta}m}\right)^{\frac{1}{d}}\right\}} \\
    \stackrel{{\text{(\romannumeral3)}}}{\geq} & 1 - 2n\left(1-\frac{c_{\vartheta}\log(2n/\varepsilon)}{c_{\vartheta}m}\right)^m \\
    \stackrel{{\text{(\romannumeral4)}}}{\geq} & 1 - \varepsilon.
\end{align*}
{
(\romannumeral1) comes from the Lipschitz continuity of $f$ and $A$; (\romannumeral2) is a simple application of set operations; (\romannumeral3) is derived from the fact that the volume of a $d$-dimensional ball of radius $r$ is proportional to $r^d$; (\romannumeral3) comes from the inequality $(1-1/a)^a\leq e^{-1}$ for $a>1$.
}
Therefore, with probability at least $1-\varepsilon$, we have
$$
\frac{1}{n}\sum_{i=1}^n \sup_{A_1,A_2\in\mathcal{A}}\|f(A_1 x_i)-f(A_2 x_i)\|_2 \leq \widehat{L}_{AR}(f) + L_f^2 L_A^2 \left(\frac{\log(2n/
    \varepsilon)}{m}\right)^{\frac{2}{d}}.
$$
Besides, for every $f$ we have
$$
L_{AR}(f)\leq \frac{1}{n}\sum_{i=1}^n \sup_{A_1,A_2\in\mathcal{A}}\|f(A_1 x_i)-f(A_2 x_i)\|_2^2 + 2\mathfrak{R}_n(\mathcal{F}) + \left(\frac{\log(1/\varepsilon)}{n}\right)^{\frac{1}{2}}
$$
with probability at least $1-\varepsilon$. Then we finally obtain that with probability at least $1-2\varepsilon$, for every $f\in\mathcal{F}$,
$$
L_{AR}(f)\leq \widehat{L}_{AR}(f) + 2\widetilde{\mathfrak{R}}_n(\mathcal{F}) + \left(\frac{\log(1/\varepsilon)}{n}\right)^{\frac{1}{2}} + L_f^2 L_A^2 \left(\frac{\log(2n/
    \varepsilon)}{c_{\vartheta}m}\right)^{\frac{2}{d}}.
$$
\end{proof}

{

\section{Additional Experiments}
In order to further verify the generality of our method, we conduct more experiments on different settings and compare our objective with different loss.
\subsection{MoCo on CIFAR10}
\textbf{Setup.} In this part, we conduct experiments with MoCo v2 pretrained on CIFAR10. We follow the setup of MoCo v2. We use ResNet-18 as the encoder and train the representation with 400 epochs. The temperature is set to 0.2 and the initial learning rate is set to 0.15. The memory bank size is set to 4096. The SGD optimizer and a cosine schedule for learning rate are used. Warm-up is not used. We conduct linear evaluation with 100 epochs on modified CIFAR10 and CIFAR100 which are used in Section 6.1.

\textbf{Results.} Results can be seen in Table \ref{table:cifar-moco}. We can see that the original results still maintain in the MoCo method. Our proposed approach improves the transferability of MoCo. As the number of views grows, the accuracy also increases.

\begin{table}[t]
\caption{Linear evaluation results  of pretrained CIFAR10 models using MoCo v2 on CIFAR10, CIFAR100 and their modified versions.}
\label{table:cifar-moco}
\centering
\vspace{0.15in}
\resizebox{0.8\textwidth}{!}{
\begin{tabular}{cl|cccccc}
\toprule
&Method  & Aug 1 & Aug 2 & Aug 3 & Aug 4 & Aug 5 & Original \\ 
\midrule
\multirow{4}*{{CIFAR10}}
&MoCo & 88.34 & 87.44 & 89.14 & 88.76 & 88.12 & 89.29 \\
&MoCo + ArCL (views=2) & 89.12 & 88.13 & 89.62 & 89.25 & 88.91 & 90.44 \\
&MoCo + ArCL (views=3) & 90.22 & 89.11 & 90.93 & 90.13 & 89.75 & 91.22 \\
&MoCo + ArCL (views=4) & \textbf{90.77} &\textbf{89.64} & \textbf{91.22} &  \textbf{90.86} & \textbf{90.18} & \textbf{91.45} \\
\bottomrule
\toprule
\multirow{4}*{{CIFAR100}}
&SimCLR& 52.98 & 48.77 & 54.47 & 53.18 & 52.93 & 55.88 \\
&SimCLR+ArCL(views=2)& 53.37 & 49.14 & 55.09 & 53.46 & 53.14 & 56.62 \\
&SimCLR+ArCL(views=3) & 54.49 & 50.29 & 55.33 & \textbf{53.78} & 53.62 & 57.13 \\
&SimCLR+ArCL(views=4)& \textbf{55.42} & \textbf{51.42} & \textbf{55.96} & 53.76 & \textbf{54.41} & \textbf{57.98} \\
\bottomrule
\end{tabular}}
\end{table}

\subsection{Comparison with Average Alignment Loss}
To make the comparison between ArCL and CL fairer, we add new experiments for original CL. For each sample, we also construct $m$ views and use the expectation of the similarity between positive pairs, namely \textbf{average alignment loss}(AAL), as the learning objective. The settings and results are shown in the following part.

\textbf{Loss Function.} For image $x_i$, the normalized features of its $m$ views from the online branch are $z_{i1},z_{i2},\dots,z_{im}$ and the normalized features from the target branch are $z'_{i1},z'_{i2},\dots,z'_{im}$. Our ArCL alignment loss is
$$L^{align}_{ArCL}=-\sum_i\min_{j\neq k}z^{\top}_{ij}z'_{ik}/\tau.$$
And the average alignment loss should be
$$L^{align}_{Average}=-\sum_i\mathop{\text{avg}}_{j\neq k}z^{\top}_{ij}z'_{ik}/\tau=-\sum_i(\sum_{j\neq k}z^{\top}_{ij}z'_{ik}/\tau)/(m^2-m).$$
The uniformity loss keeps the same and the total loss is the sum of the alignment loss and the uniformity loss.

\textbf{Hyperparameters and Datasets.}  For SimCLR, we train the models on CIFAR10 and evaluate them on modified CIFAR10 and CIFAR100 just as our original paper does. We choose the augmentation views to be 4 and the batch size to be 512. For MoCo, We conduct experiments on ImageNet, with the 800-epochs-pretrained model as the initializing model. We train the model for 50 epochs and use the same setting as in Section \ref{experiments:imagenet}.

\textbf{Results.} Results for SimCLR experiments can be seen in Table \ref{table:cifar-aal}. We compare the linear evaluation results on modified CIFAR10 and CIFAR100. Results for MoCo experiments can be seen in \ref{table:imagenet-aal}. We compare both the linear evaluation and finetuning results on small datasets. As the experiments show, when contrastive learning methods with average alignment loss use the same amount of data as ArCL, although they can have higher accuracy compared to vanilla methods (those which only use two views) on ID set, they still have a similar performance with vanilla methods on OOD set, which is much worse than ArCL. This further verifies the superiority of our method. 

\begin{table}[t]
\caption{Linear evaluation results  of pretrained CIFAR10 models using SimCLR with two different alignment losses on CIFAR10, CIFAR100 and their modified versions.}
\label{table:cifar-aal}
\centering
\vspace{0.15in}
\resizebox{0.8\textwidth}{!}{
\begin{tabular}{cl|cccccc}
\toprule
&Method  & Aug 1 & Aug 2 & Aug 3 & Aug 4 & Aug 5 & Original \\ 
\midrule
\multirow{3}*{{CIFAR10}}
&SimCLR & 88.62 & 86.27 & 88.96 & 88.56 & 88.37 & 88.81 \\
&SimCLR + AAL (views=4) & 88.90 & 86.34 & 88.72 & 88.65 & 88.44 & 89.97 \\
&SimCLR + ArCL (views=4) & \textbf{89.97} & \textbf{88.06} & \textbf{90.48} & \textbf{89.91} & \textbf{89.59} & \textbf{90.20} \\
\bottomrule
\toprule
\multirow{3}*{{CIFAR100}}
&SimCLR& 52.28 & 48.09 & 53.45 & 52.58 & 51.53 & 53.12 \\
&SimCLR+AAL(views=4)& 52.34 & 48.77 & 53.12 & 52.44 & 51.82 & 53.21 \\
&SimCLR+ArCL(views=4)& \textbf{53.40} & \textbf{50.16} & \textbf{54.92} & \textbf{53.77} & \textbf{52.61} & \textbf{54.20} \\
\bottomrule
\end{tabular}}
\end{table}

\begin{table}[t]
    \centering
	\caption{Results comparison on linear evaluation (up) and finetuning (down) of pretrained ImageNet models with ArCL alignment loss and average alignment loss on popular recognition datasets. Results style: \textbf{best} in the same augmentation view setting. \textbf{Avg} takes the average of the results on all the nine small datasets.} \label{table:imagenet-aal}
	\vspace{0.25cm}
	\resizebox{\textwidth}{!}{
	\begin{tabular}{clc|ccccccccc|c}
		\toprule
		 &Epochs & ImageNet & Aircraft & Caltech101 & Cars & CIFAR10 & CIFAR100 & DTD & Flowers & Food & Pets & \textbf{Avg}\\
		\midrule
		\multirow{5}*{\rotatebox{90}{Linear}}&MoCo & 70.68 & 41.79 & 87.92 & 39.31 & 92.28 & 74.90 & 73.88 & 90.07 & 68.95 & 83.30 & 72.49 \\
		\cmidrule(lr){2-13}
		&MoCo + AAL(views=2) & \textbf{71.12}  & 40.53 & 87.80 & 38.64 & 92.23 & 75.14 & \textbf{74.95} & 88.64 & 69.24 & 83.17 & 72.26 \\
		&MoCo + ArCL(views=2) & 69.70  & \textbf{44.29} & \textbf{89.79} & \textbf{42.15} & \textbf{93.07} & \textbf{76.70} & 74.20 & \textbf{90.40} & \textbf{70.94} & \textbf{83.68} & \textbf{73.91} \\
		\cmidrule(lr){2-13}
		&MoCo + AAL(views=3) & \textbf{71.37} & 40.41 & 87.79 & 42.09 & 92.64 & 75.31 & 74.89 & 89.23 & 69.37 & 83.79 & 72.84\\
		&MoCo + ArCL(views=3) & 69.80  & \textbf{44.57} & \textbf{89.48} & \textbf{42.11} & \textbf{93.29} & \textbf{77.33} & \textbf{74.63} & \textbf{91.13} & \textbf{71.16} &  \textbf{84.23} & \textbf{74.21}\\

		\bottomrule
		\toprule
		 \multirow{5}*{\rotatebox{90}{Finetune}}&MoCo && 83.56 & 82.54 & 85.09 & 95.89 & 71.81 & 69.95 & 95.26 & 76.81 & 88.83 & 83.30 \\
		\cmidrule(lr){2-13}
		&MoCo + AAL(views=2)  && 83.87 & 82.76 & 85.90 & \textbf{96.38} & 71.43 & \textbf{72.71} & 95.50 & 76.95 & \textbf{89.05} & 83.84\\
		&MoCo + ArCL(views=2)  && \textbf{86.05} & \textbf{87.38} & \textbf{87.28} & 96.33 & \textbf{79.39} & 72.18 & \textbf{95.89} & \textbf{81.36} & 89.03 & \textbf{86.10}\\
		\cmidrule(lr){2-13}
		&MoCo + AAL(views=3)  && 83.07 & 83.21 & 85.19 & 96.37 & 72.02 & 72.55 & 95.74 & 79.62 & 88.83 & 84.07\\
		&MoCo + ArCL(views=3) && \textbf{84.03} & \textbf{87.64} & \textbf{86.34} & \textbf{96.88} & \textbf{80.98} & \textbf{72.87} & \textbf{96.14} & \textbf{81.90} & \textbf{89.20} & \textbf{86.22}\\
		
		\bottomrule    
	\end{tabular}
	}
\end{table}

\subsection{Experiments on MNIST-CIFAR}
MNIST-CIFAR is a synthetic dataset proposed by \citep{pitfall}. It consists of $3\times 64\times 32$ synthetic images, each of which is a vertical concatenation of a $3\times32\times32$ MNIST image from class 0 or class 1, and a $3\times 32\times 32$ CIFAR10 image from class 0 or class 1. We follow the distribution settings proposed by \citep{shi2022robust}:
\begin{itemize}
    \item ID train: 1.0 correlation between MNIST and CIFAR10 labels. Contains two classes: class 0 with MNIST “0” and CIFAR10 “automobile”, and class 1 with MNIST “1” and CIFAR10 “plane”.
    \item OOD train: no correlation between MNIST and CIFAR10 labels, images from the two classes of MNIST and CIFAR10 are randomly paired. We choose the label of CIFAR10 to be the label of the concatenated image.
    \item OOD test: generated similarly to the OOD train set using the test set of MNIST and CIFAR10.
\end{itemize}
In this experiment, we train the model on the ID train set, conduct the linear evaluation on the OOD train set, and calculate the accuracy on the OOD test set. We adopt SimCLR as the CL framework and use the 4-layer CNN as the backbone just as \citet{shi2022robust} does. We fix the base feature size of CNN $C=32$ and the latent dimension $L=128$. Batch size is set to 128, the optimizer is set to SGD and the lr. scheduler is set to warmup cosine. The learning rate and weight decay are searched in the range given in Table 2 of \citet{shi2022robust}. Since the training process is easy to converge, we set the training epoch and the linear evaluation epoch to 5. We also compare the average alignment loss (AAL for short) mentioned in the Appendix \ref{aal}. 

\textbf{Results.} We can see that by using ArCL, SimCLR enjoys a rising performance. The accuracy also grows as the number of views grows, which fits our theory. We also notice that the accuracy of SimCLR with AAL does not vary too much as the number of views grows, which indicates that the key point is the usage of multi-view (\textbf{adopting the minimum} instead of the average).
\begin{table}[t]
\caption{Linear evaluation results of pretrained models using SimCLR with two different alignment losses on MNIST-CIFAR dataset. The average results under three diffrent random seeds are given.}
\label{table:mnist-cifar}
\centering
\vspace{0.15in}
\resizebox{0.8\textwidth}{!}{
\begin{tabular}{lc|lc}
\toprule
Methods & Accuracy(\%) & Methods & Accuracy(\%) \\ 
\midrule
SimCLR & 85.6\\
SimCLR + ArCL(views=3) & 86.0 & SimCLR+AAL(views=3) & 85.4\\
SimCLR + ArCL(views=4) & 87.2 & SimCLR+AAL(views=4) & 86.1\\
SimCLR + ArCL(views=5) & 87.3 & SimCLR+AAL(views=5) & 86.3\\
SimCLR + ArCL(views=6) & 88.4 & SimCLR+AAL(views=6) & 85.8\\
\bottomrule
\end{tabular}}
\end{table}

}
\end{document}